\newcolumntype{V}{>{\centering\arraybackslash} m{.5\linewidth} }
\newcommand{\children}[0]{\operatorname{children}}
\newcommand{\cost}[0]{\mathtt{c}}
\newcommand{\couplingcost}[0]{\mathtt{cost}}
\newcommand{\coupling}[0]{\pi}
\def\argmin{\mathop{\rm argmin}}
\newcommand{\Xsp}{{\mathbf{X}}}
\newcommand{\Ysp}{{\mathbf{Y}}}
\title{Multiscale Strategies for Computing Optimal Transport}
\author{\name Samuel Gerber$^1$ \email samuel.gerber@kitware.com \\
  \name Mauro Maggioni$^{2,3,4}$ \email mauro.maggioni@jhu.edu \\
       \addr $^1$Kitware, NC, U.S.A\\
       \addr Department of $^2$\!Mathematics, $^3$\!Applied Mathematics, $^4$\!Institute for Data Intensive Engineering and Science, Johns Hopkins University, Baltimore, MD, U.S.A.
}
\begin{document}

\maketitle

\begin{abstract}%
This paper presents a multiscale approach to efficiently compute approximate
optimal transport plans between point sets. It is particularly well-suited for
point sets that are in high-dimensions, but are close to being intrinsically
low-dimensional.  The approach is based on an adaptive multiscale decomposition
of the point sets. The multiscale decomposition yields a sequence of optimal
transport problems, that are solved in a top-to-bottom fashion from the
coarsest to the finest scale.  We provide numerical evidence that this
multiscale approach scales approximately linearly, in time and memory, in the
number of nodes, instead of quadratically or worse for a direct solution.
Empirically, the multiscale approach results in less than one percent relative
error in the objective function.  Furthermore, the multiscale plans constructed
are of interest by themselves as they may be used to introduce novel features
and notions of distances between point sets. An analysis of sets of brain MRI
based on optimal transport distances illustrates the effectiveness of the
proposed method on a real world data set. The application demonstrates that
multiscale optimal transport distances have the potential to improve on
state-of-the-art metrics currently used in computational anatomy.  
\end{abstract}

\section{Introduction}
\label{sec:introduction}

The study of maps between shapes, manifolds and point clouds is of great
interest in a wide variety of applications. There are many data types, e.g.
shapes (modeled as surfaces), images, sounds, and many more, where a similarity
between a pair of data points involves computing a map between the points, and
the similarity is a functional of that map.  The map between a pair of data
points however contains much more information than the similarity measure
alone, and the study of networks of such maps have been successfully used to
organize, extract functional information and abstractions, and help regularize
estimators in large collections of shapes \citep{bigdata5,bigdata10,bigdata8}.
The family of maps to be considered depends on the type of shape, manifold or
point cloud, as well as on the choice of geometric features to be preserved in
a particular application.  These considerations are not restricted to data sets
where each point is naturally a geometric object: high-dimensional data sets of
non-geometric nature, from musical pieces to text documents to trajectories of
high-dimensional stochastic dynamical systems, are often mapped, via feature
sets, to geometric objects. The considerations above therefore apply to a very
wide class of data types.

In this paper we are interested in the problem where each object is a point
cloud -- a set of points in $\mathbb{R}^D$ -- and will develop techniques for
computing maps from one point cloud to another, in particular in the situation
where $D$ is very large, but the point clouds are close to being
low-dimensional, for example they may be samples from a $d$-dimensional smooth
manifold $\mathcal{M}$ ($d\ll D$). The two point clouds may have a different
number of points, and they may arise from a sample of a low-dimensional
manifold $\mathcal{M}$ perturbed by high-dimensional noise (for more general
models see the works by~\citet{LMR:MGM1},~\citet{MMS:NoisyDictionaryLearning}
and~\citet{LiaoMaggioni}).  In this setting we have to be particularly careful
in both the choice of maps and in their estimation since sampling and noise
have the potential to cause significant perturbations.

We find optimal transport maps rather well-suited for these purposes. They
automatically handle the situation where the two point clouds have different
cardinality, they handle in a robust fashion noise, and even changes in
dimensionality, which is typically ill-defined, for point clouds arising from
real-world data~\citep{LMR:MGM1}. Optimal transport has a very long history in
a variety of disciplines and arises naturally in a wide variety of contexts,
from optimization problems in economics and resource allocation, to mathematics
and physics, to computer science (e.g. network flow algorithms).  Thus,
applications of optimal transport range from logistics and
economics~\citep{beckmann:eco1952,carlier:jcc2008}, geophysical
models~\citep{cullen:book2006}, image
analysis~\citep{rubner:iccv1998,haker:ijcv2004} to machine
learning~\citep{cuturi:arxiv2013,cuturi:jmlr2014}.  Despite these widespread
applications, the efficient computation of optimal transport plans remains
challenging, especially in complex geometries and in high dimensions. 

\begin{figure}[!b]
\centering
\begin{tabular}{cc}
\includegraphics[width=0.45\linewidth]{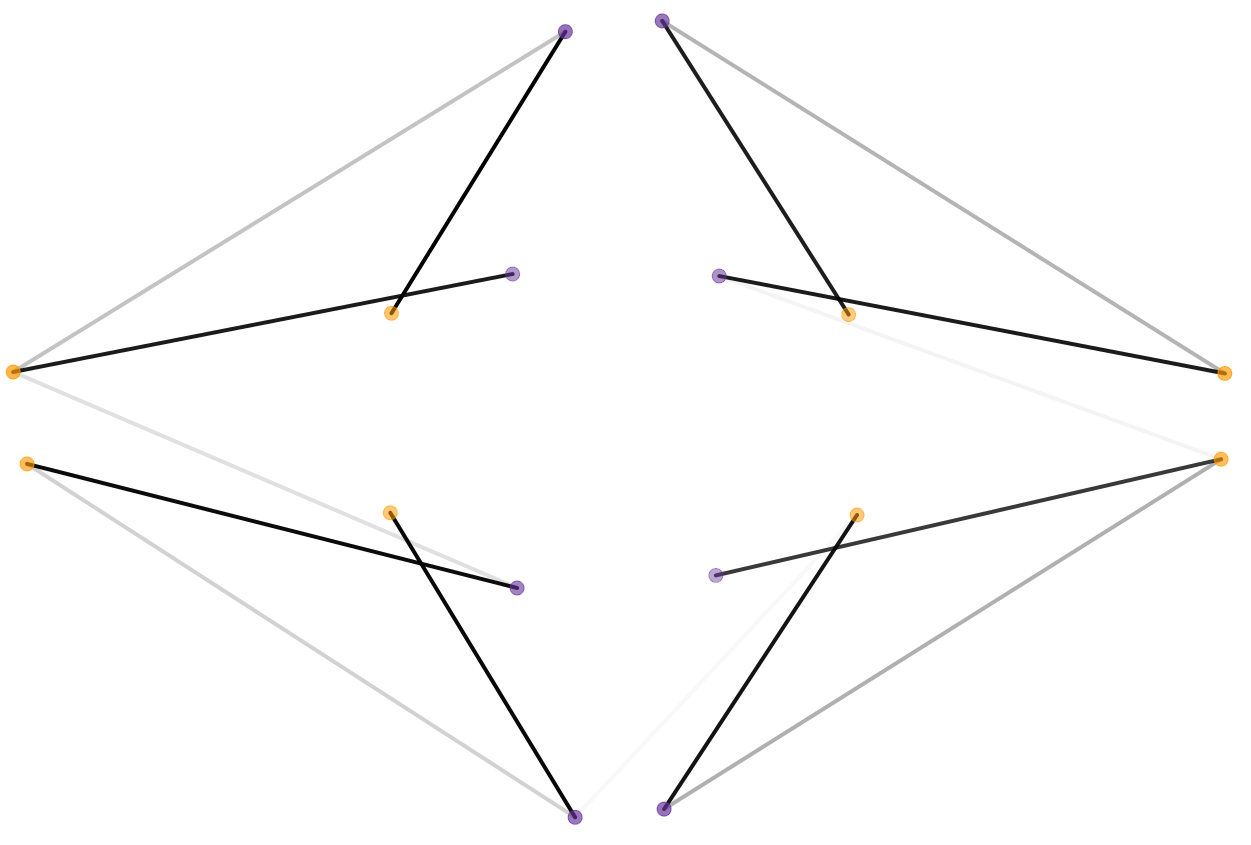} &
\includegraphics[width=0.45\linewidth]{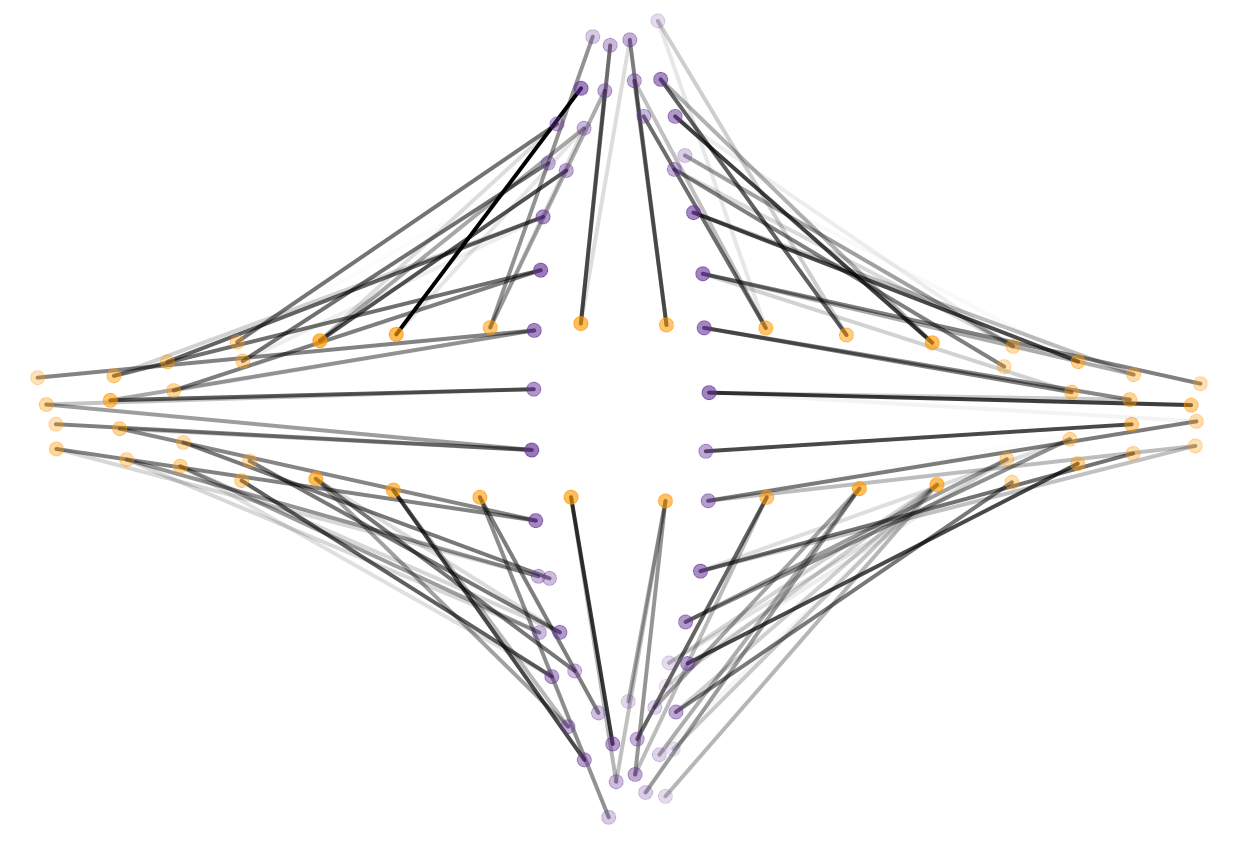} \\
\includegraphics[width=0.45\linewidth]{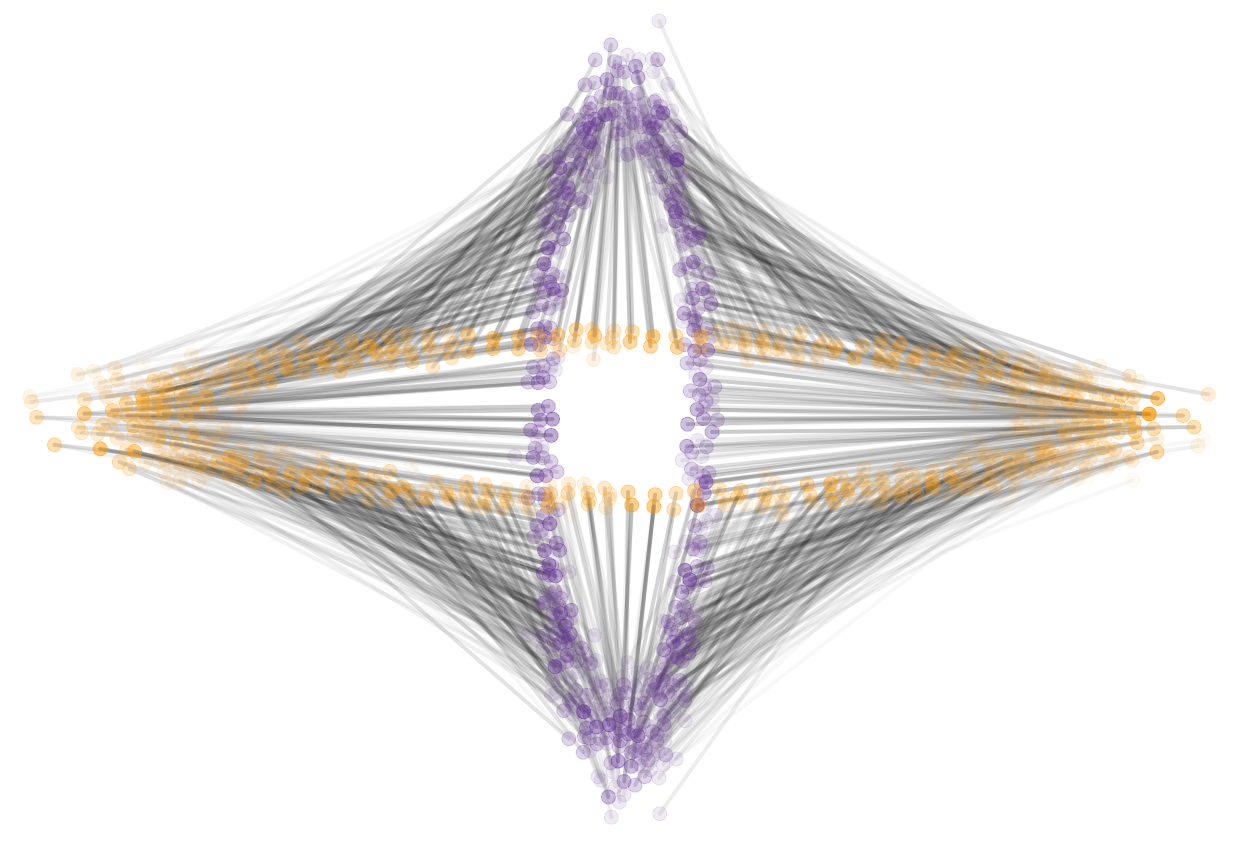} &
\includegraphics[width=0.45\linewidth]{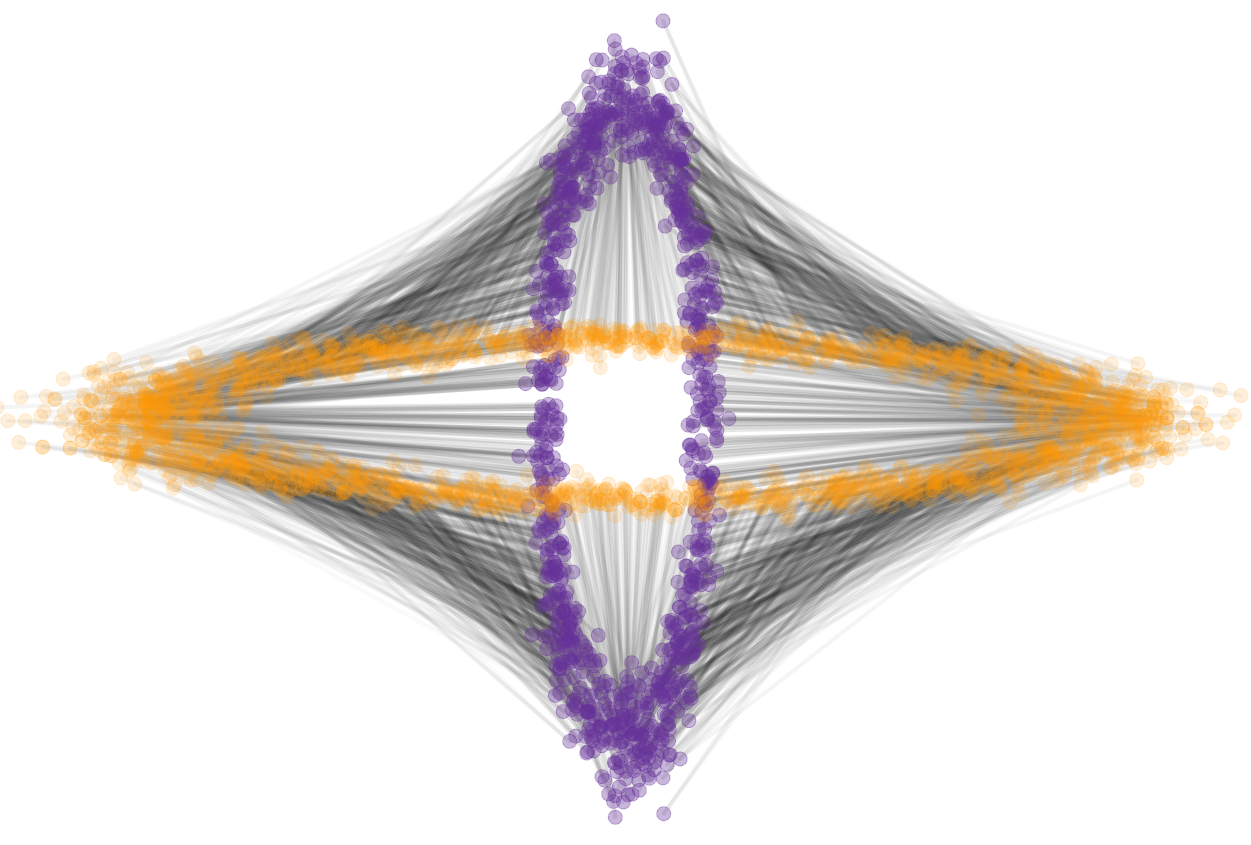} 
\end{tabular}
\caption{\label{fig:multiscale-strategy}
  Optimal transport between two noisy elliptical shapes in $2$-D. We
visualize coarse-to-fine approximations to the source (purple dots) and target
(yellow dots) point clouds, together with the optimal transportation plan at
each scale (gray edges). The intensity of the points is proportional to mass
-- at coarser scales each point represents an agglomerate of points at finer
scales, and its mass is the sum of the masses of the points it represents.
Similarly the intensity of the lines is proportional to the amount of mass
transported along the line. Notice that the optimal transport formulation
permits multiple lines exiting a source point or entering a target point.}
\end{figure}
For point sets the optimal transport problem can be solved by a specialized
linear program, the minimum network flow
problem~\citep{Ahuja:1993:NFT:137406,Tarjan1997}. The minimum network flow
problem has been extensively studied in the operations research community and
several fast algorithms exist. However, these algorithms, at least on desktop
hardware, do not scale beyond a few thousand source and target points.  Our
framework extends the applications of these algorithms to problem instances
several orders of magnitude larger, under suitable assumptions on the geometry
of the data.  We exploit a multiscale representation of the source and target
sets to reduce the number of variables in the linear program and quickly find
good initial solutions, as illustrated in Figure~\ref{fig:multiscale-strategy}.
The optimal transport problem is solved (with existing algorithms) at the
coarsest scale and the solution is propagated to the next scale and refined.
This process is repeated until the finest scale is reached. This strategy,
discussed in detail in Section~\ref{sec:mdop}, is adaptable to memory
limitations and speed versus accuracy trade-offs. For some of the refinement
strategies, it is guaranteed to converge to the optimal solution.

Our approach draws from a varied set of work that is briefly summarized in
Section~\ref{sec:background}. The proposed approach generalizes and builds on
previous and concurrently developed hierarchical
methods~\citep{glimm:arxiv2011,schmitzer2013hierarchical,schmitzer2015sparse,oberman2015efficient}.
The work in this paper adds the following contributions:
\begin{compactitem}
\item The description of a general multiscale framework for discrete optimal
  transport that can be applied in conjunction with a wide range of optimal
  transport algorithms.
\item A set of propagation and refinement heuristics, including approaches
  that are similar and/or refine existing ones
  \citep{glimm:arxiv2011,oberman2015efficient,schmitzer2015sparse} as well as
  novel ones. In particular we propose a novel propagation strategy based on
  capacity restrictions of the network flow problem at each scale. This new
  approach proves to be very efficient and accurate in practice. Overall, the
  heuristics result empirically in a linear increase in computation time with
  respect to data set size.  
\item An implementation in the R package {\em mop} that allows the combination
  of multiple heuristics to tailor speed and accuracy to the requirements of
  particular applications.
\end{compactitem}
Compared to other linear programming based approaches, the multiscale approach results
in a speedup of up to multiple orders of magnitude in large problems and
permits to solve approximately transportation problems of several orders of
magnitudes larger than previously possible. Comparing to PDE based approaches
is difficult and PDE based methods are limited to low-dimensional domains and
specific cost metrics. The proposed framework is demonstrated on several
numerical examples and compared to the state-of-the-art approximation algorithm
by~\citet{cuturi:nips2013}.

\section{Background}
\label{sec:background}
Optimal transport is the problem of minimizing the cost of moving a source
probability distribution to a target probability distribution given a function
that assigns costs to moving mass from source to target locations. The
classical formulation by~\citet{monge:book1781} considers the minimization over
mass preserving mappings.  Later~\citet{kantorovitch:ms1958} considered the
same problem but as a minimization over couplings between the two probability
measures, which permits to split mass from a single source across multiple
target locations.  More precisely, for two probability measures  $\mu$ and
$\nu$ on probability spaces ${\Xsp}$ and ${\Ysp}$ respectively, a coupling of
$\mu$ and $\nu$ is a measure $\coupling$ on ${\Xsp}\times{\Ysp}$ such that the
marginals of $\coupling$ are $\mu$ and $\nu$, i.e.
$\coupling(A\times{\Ysp})=\mu(A)$ for all ($\mu$-measurable) sets
$A\subseteq{\Xsp}$ and $\coupling({\Xsp}\times B)=\nu(B)$ for all
($\nu$-measurable) sets $B\subseteq{\Ysp}$. We denote by $\mathcal{C}(\mu,\nu)$
the set of couplings between $\mu$ and $\nu$.  Informally, we may think of
$d\coupling(x,y)$ as the amount of infinitesimal mass to be transported from
source $x$ to destination $y$, with the condition that $\coupling$ is a
coupling guaranteeing that the source mass is distributed according to $\mu$
and the destination mass is distributed according to $\nu$.  Such couplings
always exist: we always have the trivial coupling $\coupling=\mu\times\nu$. The
trivial coupling is uninformative, every source mass $d\mu(x)$ is transported
to the same target distribution $\nu$. In Monge's formulation the coupling is
restricted to the special form $\coupling(x,y)=\delta_{T(x)}$ where $\delta_y$
is the Dirac-$\delta$ measure with mass at $y\in{\Ysp}$, and $T$ is a function
$\Xsp\rightarrow\Ysp$: in this case the coupling is ``maximally informative''
in the sense that there is a function mapping each source $x$ to a single
destination $y$; in particular the mass $d\mu(x)$ at $x$ is not split into
multiple portions that are shipped to different target $y$'s.  

To define optimal transport and optimal couplings, we need a cost function
$\cost(x,y)$ on ${\Xsp}\times{\Ysp}$ representing the work or cost needed to
move a unit of mass from $x$ to $y$. Then for every coupling $\coupling$ we may
define the cost of $\coupling$ to be 
\begin{equation}
\mathbb{E}_{(X,Y)\sim\coupling}[\cost(X,Y)]\,,
\end{equation}
with $(X,Y)$ being a pair of random variables distributed according to $\coupling$.
An optimal coupling $\coupling$ minimizes this cost over all choices of couplings.
When seeking an optimal transportation plan, the above becomes 
$$
\mathbb{E}_{X\sim\mu}[\cost(X,T(X))]=\int_{{\Xsp}} \cost(x,T(x)) d\mu(x)\,,
$$ 
and one minimizes over all (measurable) functions $T:{\Xsp}\rightarrow{\Ysp}$.
One often designs the cost function $c$ in an application-dependent fashion,
and the above framework is extremely general.  When ${\Xsp}={\Ysp}$ is a metric
space with respect to a distance $\rho$ (with suitable technical conditions
relating the metrics and the measures $\mu$, $\nu$ that we will not delve into,
referring the reader to \cite{villani:book2009}), then distinguished choices
for the cost are those that are related to the metric structure. The natural
choice of $\cost(x,y)=\rho(x,y)^p$, for some $p>0$ leads to the definition of
the Wasserstein-Kantorovich-Rubinstein metric on the space of probability
measures on ${\Xsp}$: 
\begin{equation}
W_p(\mu,\nu):=\min_{\coupling\in\mathcal{C}(\mu,\nu)}\left(\int_{{\Xsp}}\int_{{\Xsp}}
\rho(x,y)^p d\coupling(x,y)\right)^{\frac1p}\,.  
\end{equation}

Computational solutions to optimal transport split roughly in two settings:
Approaches based on the solution of partial differential equations derived from
the continuous optimal transport formulation, briefly discussed in
Section~\ref{sec:continuous} and, more relevant to this paper, combinatorial
optimization methods to directly solve for a discrete optimal transport plan
discussed in Section~\ref{sec:lp}.

\subsection{Continuous Optimal Transport}
\label{sec:continuous}
In case that at least the source distribution admits a density, and when the
cost function is the squared Euclidean distance, the optimal coupling is
deterministic, i.e. there exists a transport map, and the optimal solution is
the gradient of a convex function~\citep{brenier:cpam1991}. This has been
exploited to solve the optimal transport problem by numerical partial
differential equation
approaches~\citep{benamou:nm2000,angenent:jma2003,haker:ijcv2004,iollo:jcp2011,papadakis:arxiv2013,benamou2014numerical}.

An alternative formulation proposed by~\citet{aurenhammer:algorithmica1998}
shows that the optimal transport from a source density to a target distribution
of a set of weighted Dirac delta's can be solved through a finite dimensional
unconstrained convex optimization. \citet{merigot:cgf2011} proposes a
multiscale approach for the formulation of~\citet{aurenhammer:algorithmica1998}.

Both the numerical PDE based approaches as well as the unconstrained convex
optimization require a discretization of the full domain which is generally not
feasible for higher dimensional domains. For arbitrary cost functions and
distributions the optimal transport problem does typically not result in a
deterministic coupling and can not be solved through PDE based approaches.

\subsection{Discrete Optimal Transport and Linear Programming}
\label{sec:lp}

For two discrete distributions $\mu = \sum_1^n w(x_i) \delta(x_i)$ and $ \nu =
\sum_1^m v(y_i) \delta(y_i)$ with $\sum w(x_i) = \sum v(y_i) = 1$ the optimal
transport problem is equivalent to the linear program 
\begin{equation}
\min_\coupling \sum_{\substack{i=1,\dots,n\\ j=1,\dots,m}} 
      \cost(x_i, y_j) \coupling(x_i, y_j) \quad \text{s.t.}\quad 
\begin{cases}
\sum_j \coupling(x_i, y_j) = \mu(\{x_i\})= w(x_i) & \\ 
\sum_i \coupling(x_i, y_j) = \nu(\{y_j\}) = v(y_j) & \\
 \coupling(x_i,y_j)\ge 0
\end{cases}\,.
\label{e:LPformulation}
\end{equation} The solution is called an optimal coupling $\coupling^*$, and the
minimum value attained at $\coupling^*$, is called the cost of $\pi^*$, or the
optimal cost of the transport problem, and is denoted by
$\couplingcost(\coupling^*)=\sum_{i,j} \cost(x_i,y_j)\pi^*(x_i,y_j)$. The
constraints enforce that $\coupling$ is a coupling. The variables
$\coupling(x_i, y_j)$ correspond to the amount of mass transported from
source $x_i$ to target $y_j$, at cost $\cost(x_i, y_j)$.  The linear
constraints are of rank $n+m-1$: when $n+m-1$ of the constraints are satisfied,
either the $n$ constraints of the source density or the $m$ constraints of the
target density are satisfied. Since the sum of outgoing mass is equal to the sum
of incoming mass, i.e. $\mu(\Xsp) = \nu(\Ysp)$, it follows that all constraints
must be satisfied. The optimal solution lies, barring degeneracies, on a corner
of the polytope defined by the constraints, i.e. is a basic feasible solution. 
This implies that exactly $n+m-1$ entries of the optimal coupling $\coupling$
are non-zero, i.e. $\pi$ is a sparse matrix.  The optimal coupling is a Monge
transport map if and only if all the mass of a source $x_i$ is transported to
exactly one target location. A Monge solution does not exist for every optimal
transport problem, in fact a small perturbation of $\mu$ will always suffice to
make a Monge solution impossible. 


For source and target data sets with the same cardinality and equally weighted
$\delta$-functions, Kantorovich's optimal transport problem reduces to
an assignment problem, whose solution is a Monge optimal transport map. In this
special case, the optimal transport problem can be efficiently solved by the
Hungarian algorithm~\citep{kuhn:nrlq1955}. The assignment problem results in
a degenerate linear program since only $n$ entries are non-zero (instead of
$2n-1$). We can therefore think of optimal transport as a robust version of
assignments. This can also be seen from the point of view of convexity: in the
assignment problem, $\coupling$ is a permutation matrix deciding to which $y_j$
each $x_i$ is transported. The convex hull of permutation matrices is exactly
the set of doubly-stochastic matrices, to which a coupling $\pi$ belongs as a
consequence of the constraints in \eqref{e:LPformulation}.

For point sets with different cardinalities and/or points with different masses
the optimal transport problem can be solved by a linear program and is a
special case of the minimum cost network flow problem. The minimum cost flow
problem is well studied and a number of
algorithms~\citep{ford:ms1956,klein:ms1967,
cunningham:mp1976,goldberg:stoc1987,bertsekas:or1988,orlin:mp1997} exist for
its solution. This discrete solution approach is not constrained to specific
cost functions and can work with arbitrary cost functions.

However, the linear programming approach neglects possibly useful geometric
properties of the measures and the cost function.  Our work makes
assumptions about the underlying geometry of the measure spaces and the
associated cost function, and in this way is a mixing of the low-dimensional
``geometric PDE'' approaches with the discrete non-geometric optimization
approaches. It exploits the geometric assumptions to relieve the shortcomings
of either approach, namely it scales to high-dimensional data, provided that
the intrinsic dimension is low in a suitable sense, and does not require a mesh
data structure. At the same time we use the geometry of the data
to speed up the linear program, which per--se does not leverage geometric
structures.  

The refinement strategies of the proposed multiscale approach add subsets of paths among
all pairwise paths at each subsequent scale to improve the optimal transport
plan. This strategy of adding paths, is akin to column generation
approaches~\citep{desrosiers:book2005}.  Column generation, first developed
by~\citet{dantzig:informs1960} and~\citet{ford:ms1956}, reduces the number of
variables in a linear program by solving a smaller linear program on a subset
of the original variables and introducing new variables on demand. However, the
proposed approach exploits the geometry of the problem instead of relying on an
auxiliary linear program~\citep{dantzig:informs1960} or shortest path
computations~\citep{ford:ms1956} to detect the entering variables.

\subsection{Approximation Strategies}
In the computer vision literature, the Earth Movers distance or equivalently the
Wasserstein-1 distance, which is simply the cost of the optimal coupling, is
a successful similarity measure for image retrieval~\citep{rubner:iccv1998}. In
this application the transport plan is not of interest but only the final
transport cost.  For this
purpose~\citet{indyk2003fast},~\citet{shirdhonkar:cvpr2008} and
~\citet{andoni:soda2008} developed algorithms that compute an approximate cost
but do not yield a transport plan.  Some of these approaches are based on the
dual formulation of optimal transport, which involves testing against Lipschitz
functions, and observing that Lipschitz functions may be characterized by decay
properties of their wavelet coefficients. In this sense these approaches
are multiscale as well.

To speed up computations in machine learning
applications~\citet{cuturi:nips2013} proposes to {\em smooth} transport plans by
adding a maximum entropy penalty to the  optimal transport formulation. The
resulting optimization problem is efficiently solved through matrix scaling
with Sinkhorn fixed-point iterations.  Because of the added regularization
term, the solution will in general be different from the optimal
transportation. It may however be the case that these particular (or
perhaps other) regularized solutions are better suited for certain
applications.

\subsection{Related Work}
Very recently a number of approaches have been proposed to solve the optimal
transport in a multiscale
fashion~\citep{glimm:arxiv2011,schmitzer2013hierarchical,schmitzer2015sparse,oberman2015efficient}.
\citet{glimm:arxiv2011} design an iterative scheme to solve a discrete optimal
transport problem in reflector design and propose a heuristic for the iterative
refinements based on linear programming duality. This iterative scheme can be
interpreted as a multiscale decomposition of the transport problem based on
geometry of the source and target sets. The proposed potential refinement
strategy extends the heuristic proposed by~\citet{glimm:arxiv2011} to guarantee
optimal solutions and adds a more efficient computation strategy:  Their
approach requires to check all possible variables at the next scale. In
Section~\ref{sec:potential} we introduce a variation of the approach
by~\citet{glimm:arxiv2011} by adding a branch and bound strategy to avoid
checking all variables, and an iterative procedure that guarantees optimal
solutions.  

\citet{schmitzer2013hierarchical} propose a multiscale approach on grids that
uses a refinement strategy based on spatial neighborhoods, akin to the
neighborhood refinement described in Section~\ref{sec:neighborhood}.
\citet{schmitzer2015sparse} uses a multiscale approach to develop a modified
auction algorithm with guaranteed worst case complexity and optimal solutions.
We use data structures that enable us to quickly construct neighborhoods, even
for point clouds that live in high-dimensions, but have low-intrinsic dimension;
we also exploit these structures to not compute all the pairwise possible
costs, as well as the candidate neighborhoods in our propagation steps. This
can result in substantial savings in the scaling of the algorithm, from
$|\mathbf{X}|^3$ to just $|\mathbf{X}|\log|\mathbf{X}|$. We are therefore able
to scale to larger problem, solving on a laptop problems an order of magnitude
larger than those in \citet{oberman2015efficient} for example, and showing
linear scaling on a large range of scales.  \citet{schmitzer2015sparse} use the
c--cyclical monotonicity property of optimal transport
plans~\citep[Chapter~5]{villani:book2009} to construct ``shielding
neighborhoods'' that permit to exclude paths from further consideration. The
idea of shielding neighborhoods is combined with a multiscale strategy that
permits to quickly refine initial neighborhood estimates. 

Finally, in our work we emphasize that the multiscale construction is not only
motivated by its computational advantage, but also as a way of revealing
possibly important features of the optimal transportation map. As we show in
Section \ref{s:brainMR}, features collected from the multiscale optimal
transportation maps leads to improved predictors for brain conditions. More
generally, we expect multiscale properties of optimal transportation maps to be
useful in a variety of learning tasks; the connections between learning and
optimal transportation are still a very open field, to be explored and
exploited.

\section{Multiscale Optimal Transport}
\label{sec:mdop}
Solving the optimal transport problem for two point sets $\Xsp$ and $\Ysp$
directly requires $|\Xsp||\Ysp|$ variables, or paths to consider. In other
words, the number of paths along which mass can be transported grows
quadratically in the number of points and quickly yields exceedingly large
problems.  The basic premise of the multiscale strategy is to solve a sequence
of transport problems based on increasingly accurate approximations of the
source and target point set. The  multiscale strategy helps to reduce the
problem size at each scale by using the solution from the previous scale to
inform which paths to include in the optimization at the next finer scale.
Additionally, the solution at the previous scale helps to find a good
initialization for the current scale which results in fewer iterations to solve
the reduced size linear program.

The multiscale algorithm (see Algorithm~\ref{f:algo} and
Figure~\ref{fig:multiscalepic} for a visual illustration) comprises of three
key elements: 
\begin{itemize}
\item[(I)] A way of {\textit{\textbf{coarsening} the sets}} of source points ${\Xsp}$ and  measure $\mu$ in a multiscale fashion, yielding a chain
\begin{equation}
({\Xsp},\mu)=:({\Xsp}_J,\mu_J) \rightarrow({\Xsp}_{J-1},\mu_{J-1})
\rightarrow\dots\rightarrow({\Xsp}_j,\mu_j)
\rightarrow\dots\rightarrow({\Xsp}_0,\mu_0) 
\end{equation}
connecting the scales from fine to coarse, with ${\Xsp}_j$ of decreasing
cardinality as the scale decreases, and the discrete measure $\mu_j$
``representing'' a coarsification of $\mu$ at scale $j$, with
$\mathrm{supp}(\mu_j)={\Xsp}_j$ (the support of $\mu_j$ is the set of points
with positive measure). Similarly for $\Ysp$ and $\nu$ we obtain the chain
\begin{equation}
({\Ysp},\mu)=:({\Ysp}_J,\nu_J) \rightarrow({\Ysp}_{J-1},\nu_{J-1})
\rightarrow\dots\rightarrow({\Ysp}_j,\nu_j)
\rightarrow\dots\rightarrow({\Ysp}_0,\nu_0) 
\end{equation}
This coarsening step is described in Section~\ref{sec:coarsening} and the
resulting multiscale family of transport problems is discussed in
Section~\ref{sec:multiscaleproblems}.
\item[(II)] A way of {\textit{\textbf{propagating} a coupling}} $\coupling_j$
  solving the transport problem $\mu_j\rightarrowtail\nu_j$ at scale $j$ to a
  coupling $\coupling_{j+1}$ at
  scale $j+1$. This is described in Section~\ref{sec:propagation}.
\item[(III)] A way of {\textit{\textbf{refining} the propagated solution}} to
the optimal coupling at scale $j$. This is described in
Section~\ref{sec:refinement}.  
\end{itemize}


\begin{algorithm}[h]
 \KwIn{Two discrete measures $\mu$ and $\nu$, and a stopping scale $J_0\le J$}
 \KwOut{Multiscale family of transport plans $(\coupling_j: \mu_j\rightarrowtail
        \nu_j)_{j=0}^{J_0}$}

Construct multiscale structures $\{\{{\Xsp}_j,\mu_j)\}_{j=0}^J$ and
$\{\{{\Ysp}_j,\nu_j)\}_{j=0}^J$. 

Let $\tilde \coupling_0$ be an arbitrary coupling $\mu_0\rightarrowtail\nu_0$.

 \For{ $j=0\dots J_0-1$ }{
   Refine initial guess $\tilde\coupling_j$ to the optimal coupling $\coupling_j: \mu_j\rightarrowtail\nu_j$.
      
   Propagate $\coupling_j$ from scale $j$ to scale $j+1$, obtaining a
   coupling $\tilde\coupling_{j+1}$}
\caption{Multiscale Discrete Optimal Transport  
\label{f:algo}
}
\end{algorithm}


\subsection{The Coarsening Step: Multiscale Approximations to $\Xsp$, $\mu$ and $\cost$}
\label{sec:coarsening}
To derive approximation bounds for the error of the multiscale transport
problem at each scale we rely on the notion of a regular family of multiscale
partitions formally described in Section~\ref{sec:mspartitions}. The multiscale
partition is used to define approximations to $\mu$ and $\cost$ at
all scales. An integral part of the definitions is that the constructions can be
interpreted as a tree, with all nodes at a fixed height corresponding to one scale
of the multiscale partitioning. 

We start with some notation needed for the definition of the
multiscale partitions.  Let $({\Xsp},\rho,\mu)$ be a measure metric space with
metric $\rho$ and finite measure $\mu$. Without loss of generality assume that
$\mu(\Xsp)=1$. The metric ball of center $z$ and radius $r$ is $B_z(r)=\{
x\in{\Xsp} : \rho(x,z)<r\}$. We say that ${\Xsp}$ has doubling dimension $d$ if
every ball $B_z(r)$ can be covered by at most $2^d$ balls of radius
$r/2$~\citep{assouad1983plongements}.  Furthermore, a space has a doubling
measure if $\mu(B_z(r))\asymp r^d$, i.e. if there exist a constant $c_1$ such
that for every $z\in\Xsp$ and $r>0$ we have $c_1^{-1}r^d\le\mu(B_z(r))\le
c_1r^d$.  Here and in what follows, we say that $f\asymp g$ if there are two
constants $c_1,c_2>0$ such that for every $z$ in the domain of both functions
$f,g$ we have $c_1 f(z)\le g(z)\le c_2 f(z)$ (and therefore a similar set of
inequalities holds with the roles of $f$ and $g$ swapped), and we say that $f$
and $g$ have the same order of magnitude.  Having a doubling measure implies
having a doubling metric, and up to changing the metric to an equivalent one,
one may choose the same $d$ in the doubling condition for the metric and in
that for the measure: we assume this has been done from now on.  This family of
spaces is rather general, it includes regular domains in $\mathbb{R}^D$, as
well as smooth compact manifolds $\mathcal{M}$ endowed with volume measures.

\subsubsection{Multiscale Approximations to $\Xsp$ and $\mu$}
\label{sec:mspartitions}
A regular family of multiscale partitions, with scaling parameter $\theta>1$,
is a family of sets $\{\{C_{j,k}\}_{k=1}^{K_j}\}_{j=0}^J$, where $j$ denotes
the scale and $k$ indexes the sets at scale $j$, such that: 
\begin{itemize}
\item[(i)] the sets $\{C_{j,k}\}_{k=1}^{K_j}$ form a partition of ${\Xsp}$,
  i.e.  they are disjoint and $\cup_{k=1}^{K_j}C_{j,k} = {\Xsp}$;
\item[(ii)] either $C_{j+1,k'}$ does not intersect a $C_{j,k}$, or it is
  completely contained in it;
\item[(iii)] there exists a constant $A>0$ such that for all $j,k$ we have
  the diameter $\mathrm{diam}(C_{j,k})\le A\theta^{-j}$;
\item[(iv)] each $C_{j,k}$ contains a ``center'' point $c_{j,k}$ such that
  $B_{c_{j,k}}(\theta^{-j})\subseteq C_{j,k}$.
\end{itemize}
To ease the notation we will fix $\theta=2$ in what follows, but the
constructions and results hold, mutatis mutandis, for general $\theta>1$.  The
properties (i) and (ii) above imply that there exists a tree $\mathcal{T}$, with nodes at
scale $j$ (i.e. at distance $j$ from the root) in bijection with
$\{C_{j,k}\}_{k=1}^{K_j}$, such that $(j+1,k')$ is a child of $(j,k)$ if and
only if $C_{j+1,k'}$ is contained in $C_{j,k}$.  Moreover properties (iii) and
(iv), together with the properties of spaces with a doubling measure, imply
that $\mu(C_{j,k})\asymp 2^{-jd}$ and $K_j\asymp 2^{jd}$.  These partitions are
classical in harmonic analysis, mimicking dyadic cubes in Euclidean space, and
they have recently been used to construct multiscale decompositions of data
sets in high-dimensions ~\citep{LMR:MGM1,CM:MGM2,IM:GMRA_CS,6410789}.  We say
that $C_{j+1,k'}$, or even $(j+1,k')$, is a child of $C_{j,k}$ (respectively,
of $(j,k)$) if $C_{j+1,k'}\subseteq C_{j,k}$, and that such $C_{j,k}$
(respectively $(j,k)$) is a parent of $C_{j+1,k'}$ (resp.  $(j+1,k')$).

Given two discrete sets ${\Xsp}$ and ${\Ysp}$ in a doubling metric space of
homogeneous type with dimension $d$, we construct the corresponding families of
multiscale partitions $\{\{C^{{\Xsp}}_{j,k}\}_{k=1}^{K^{{\Xsp}}_j}\}_{j=0}^J$
and $\{\{C^{{\Ysp}}_{j,k}\}_{k=1}^{K^{{\Ysp}}_j}\}_{j=0}^J$ (we assume the same
range of scales to keep the notation simple). The construction of the
multiscale approximations is illustrated in Figure~\ref{fig:multiscalepic}.
\begin{figure}[thb]
\centering
\includegraphics[width=0.99\linewidth]{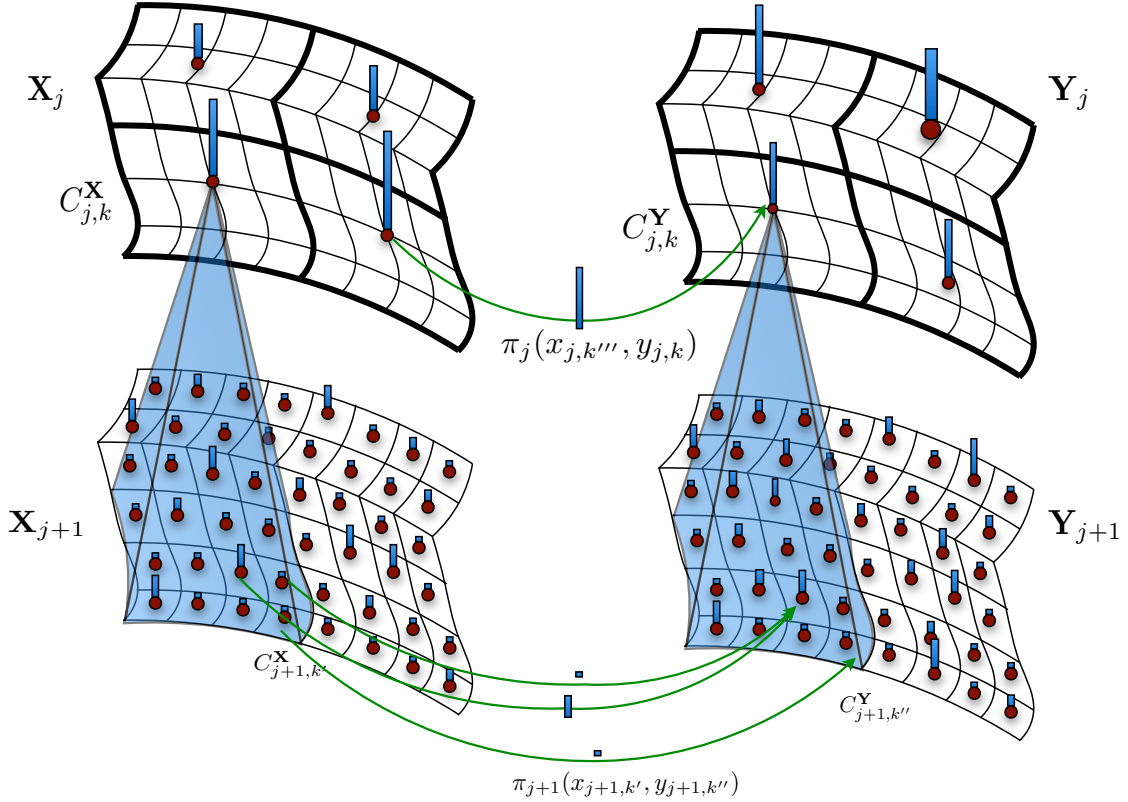}
\caption{
\label{fig:multiscalepic}
An illustration of the multiscale framework. The coarsening step constructs
from a set of weighted points $\Xsp_{j+1}$ at scale $j+1$ a smaller set of
weighted points at scale $j$ (blue bars). An equivalent
coarsening is performed on the target point set $\Ysp_{j+1}$ Neighboring points
at scale $j+1$ are combined into a single representative at scale $j$ with mass
equivalent to the weights of the combined points. The optimal transport plan
(green arrows) is solved at the coarser scale $j$, then propagated to scale
$j+1$ and refined.  
} 
\end{figure}

The space ${\Xsp}_j$ will be the partition at scale $j$, namely the set
$\{C^{{\Xsp}}_{j,k}\}_{k=1}^{K^{{\Xsp}}_j}$, of cardinality $K^{{\Xsp}}_j$.
The measures $\mu$ and $\nu$ may be coarsened in the natural way, by letting
$\mu_j$ be defined recursively on ${\Xsp}_j$ by 
\begin{equation}
 \mu_j( C_{j,k}^{{\Xsp}} )= \sum_{(j+1,k')\text{ child of } (j,k)} \mu_{j+1}( C_{j+1,k'}^{{\Xsp}} )\,,
\label{e:muj}
\end{equation}
and similarly for $\nu_j$.
These are in fact projections of these measures, and may also be interpreted as
conditional expectations with respect to the $\sigma$-algebra generated by the
multiscale partitions.  We can associate a point $\overline c_{j,k}^{{\Xsp}}$
to each $C_{j,k}^{{\Xsp}}$ in various ways, by ``averaging'' the points in
$C_{j+1,k'}^{{\Xsp}}$, for $(j+1,k')$ a child of $(j,k)$. At the finest scale
we may let $\overline{c}^{{\Xsp}}_{J,k}={c}^{{\Xsp}}_{J,k}$ (this being a
``center'' for $C_{J,k}$ as in item (iv) in the definition of multiscale
partitions), and then recursively we defined the coarser centers step from
scale $j+1$ to scale $j$ in one of the following ways: 
\begin{itemize}
\item[(i)] If the metric space is also a vector space a natural definition of
$x_{j,k}:=\overline c_{j,k}^{{\Xsp}}$ is a weighted average of the $\overline
c_{j+1,k'}^{{\Xsp}}$ corresponding to children: 
$$
\overline{c}^{{\Xsp}}_{j,k} = 
 \sum_{(j+1,k')\text{ child of } (j,k)} \mu_{j+1}(\{\overline c_{j+1,k'}^{{\Xsp}}\}) \overline c_{j+1,k'}^{{\Xsp}}\,.
$$
\item[(ii)] In general we can define $x_{j,k}:=\overline c_{j,k}^{{\Xsp}}$ as the point
$$
\overline{c}^{{\Xsp}}_{j,k}=\mathrm{argmin}_{c \in{\Xsp}}\,\,
\sum_{(j+1,k')\text{ child of } (j,k) } \rho^p( c, \overline c_{j+1,k'}^{{\Xsp}} ) \,,
  $$
for some $p \geq 1$, typically $p=1$ (median) or $p=2$ (Fr\'echet mean).
  

\end{itemize} 
Of course similar constructions apply to the space ${\Ysp}_j$, yielding points
$y_{j,k}:=\overline c_{j,k}^{{\Ysp}}$.
We discuss algorithms for these constructions in Section \ref{s:multiscalepointsets}.

\subsubsection{Coarsening the cost function $\cost$}
The multiscale partition provides several ways to coarsen the cost function: for
every $x_{j,k}$ and $y_{j,k'}$ we consider
\begin{itemize}
\item[($\cost$-i)] the pointwise value 
  \begin{equation}
\cost_j(c_{j,k}^{\Xsp}, c_{j,k'}^{\Ysp}) := 
  \cost(x_{j,k}^{\Xsp},y_{j,k'}^{\Ysp})\,,
  \label{e:costpointwise}
\end{equation}
where $x_{j,k}$ and $y_{j,k'}$ are defined in any of the ways above;
\item[($\cost$-ii)] the local average 
\begin{equation}
\cost_j(c_{j,k}^{\Xsp}, c_{j,k'}^{\Ysp}) := 
\argmin_{\alpha} \sum_{\substack{x\in C^{{\Xsp}}_{j,k},\,y \in
C^{{\Ysp}}_{j,k'}}}\left(\alpha-\cost(x,y)\right)^2 
= \frac{\sum_{\substack{x\in C^{{\Xsp}}_{j,k},\,y\in
C^{{\Ysp}}_{j,k'}}}\cost(x,y)}{|C_{j,k}^{{\Xsp}}| |C_{j,k'}^{{\Ysp}}|}\,;
\label{e:costlocalave}
\end{equation}
\item[($\cost$-iii)] the local weighted average 
\begin{align*}
\cost_j(c_{j,k}^{\Xsp},c_{j,k'}^{\Ysp})
&:=\argmin_{\alpha}\sum_{\substack{x\in C^{{\Xsp}}_{j,k},\, y\in
C^{{\Ysp}}_{j,k'}}}\left(\alpha-\cost(x,y)\right)^2
\coupling^*_{j-1}(x_{j-1,k_1},y_{j-1,k'_1})\\ 
&=\frac{\sum_{\substack{x\in C^{{\Xsp}}_{j,k},\,y\in C^{{\Ysp}}_{j,k'}}}\cost(x,y)\coupling^*_{j-1}(x_{j-1,k_1},y_{j-1,k'_1})}{
\sum_{\substack{x\in C^{{\Xsp}}_{j,k},\,y\in C^{{\Ysp}}_{j,k'}}}\coupling^*_{j-1}(x_{j-1,k_1},y_{j-1,k'_1})
}\,,
\end{align*}
where $\pi^*_{j-1}$ is the optimal or approximate transportation plan at scale
$j-1$, defined in \eqref{e:LPformulation_j}; $k_1$ is the unique index for
which $C^{{\Xsp}}_{j,k}\subseteq C^{{\Xsp}}_{j-1,k_1}$ and $k'_1$ is the unique
index for which $C^{{\Ysp}}_{j,k}\subseteq C^{{\Ysp}}_{j-1,k'_1}\,.$ 
\end{itemize}

\subsection{Multiscale Family of Optimal Transport Problems}
\label{sec:multiscaleproblems}
With the definitions of the multiscale family of coarser spaces ${\Xsp}_j$ and
${\Ysp}_j$, corresponding measures $\mu_j$ and $\nu_j$, and corresponding cost
$\cost_j$, we may consider, for each scale $j$, the following optimal transport
problem:
\begin{equation}
\begin{aligned}
\coupling_j^*&: = \argmin_\coupling\!\!\!\sum_{\substack{k=1,\dots,K_j^{{\Xsp}}\\ k'=1,\dots,K_j^{{\Ysp}}}} 
              \!\!\!\!\cost_j(x_{j,k}, y_{j,k'}) \coupling(x_{j,k}, y_{j,k'}) 
              \,\,\text{s.t.}\,\,
\begin{cases}
\sum_{k'} \coupling(x_{j,k}, y_{j,k'}) = \mu_j(\{x_{j,k}\}) & \forall k\in K_j^\Xsp \\ 
\sum_{k} \coupling(x_{j,k}, y_{j,k'}) = \nu_j(\{y_{j,k'}\}) & \forall k'\in K_j^\Ysp\\
 \coupling(x_{j,k},y_{j,k'})\ge 0
\end{cases}
\label{e:LPformulation_j}
\end{aligned}
\end{equation}

The problems in this family are related to each other, and to the optimal transportation
problem in the original spaces. We define the cost of a coupling as
\begin{equation}
\couplingcost(\coupling_j) = \sum_{\substack{k=1,\dots,K_j^{{\Xsp}}\\
k'=1,\dots,K_j^{{\Ysp}}}}  \cost_j(x_{j,k}, y_{j,k'}) \coupling_j(x_{j,k},
y_{j,k'}) \, . 
\end{equation}
The cost of the optimal coupling $\coupling_j^*$ at scale $j$ is provably an
approximation to the cost of the optimal coupling $\coupling^*$ (which is equal
to $\coupling_J^*$):

\begin{proposition}
Let $\coupling^*$ be the optimal coupling, i.e. the solution to
\eqref{e:LPformulation}, and $\coupling^*_j$ the optimal coupling at scale $j$,
i.e. the solution to \eqref{e:LPformulation_j}. Define 
\begin{equation}
E_j(\coupling^*) := \sum_{\substack{k=1,\dots,K_j^{{\Xsp}}\\ k'=1,\dots,K_j^{{\Ysp}}}} 
  \sum_{\substack{x\in C^{{\Xsp}}_{j,k}\\ y\in C^{{\Ysp}}_{j,k'}}}
  \left(\cost_j(x_{j,k},y_{j,k'})-\cost(x,y)\right) \coupling^*(x,y)\,.
\end{equation}
Then
\begin{equation}
\couplingcost(\coupling_j^*)\le\couplingcost(\coupling^*)+E_j(\coupling^*)\,,
\label{e:UBcouplingcostpij}
\end{equation}
and if $\cost_j=\cost$ and $\cost$ is Lipschitz with constant $||\nabla\cost||_\infty$, we have 
\begin{equation}
\couplingcost(\coupling_j^*)\le\couplingcost(\coupling^*)+2^{-j}A||\nabla \cost||_\infty\,,
\label{e:UBcouplingcostpijboundednabld}
\end{equation}
where $A$ is such that
$\max_{k,k'}\{\mathrm{diam}(C^{{\Xsp}}_{j,k}),\mathrm{diam}(C^{{\Ysp}}_{j,k'})\}\le
A\cdot2^{-j}$.
\end{proposition}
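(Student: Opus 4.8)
The plan is to build a coupling for the scale-$j$ problem \eqref{e:LPformulation_j} by ``pushing $\coupling^*$ forward'' through the multiscale partition, and then to combine optimality of $\coupling_j^*$ with a term-by-term comparison of $\cost_j$ against $\cost$. Concretely, set
\[
\tilde\coupling_j(x_{j,k},y_{j,k'}) \;:=\; \sum_{x\in C^{\Xsp}_{j,k}}\ \sum_{y\in C^{\Ysp}_{j,k'}} \coupling^*(x,y)\,.
\]
I would first check $\tilde\coupling_j\in\couplings(\mu_j,\nu_j)$: nonnegativity is immediate, and summing over $k'$ collapses the $y$-sum, $\sum_{k'}\tilde\coupling_j(x_{j,k},y_{j,k'}) = \sum_{x\in C^{\Xsp}_{j,k}}\sum_{y}\coupling^*(x,y) = \sum_{x\in C^{\Xsp}_{j,k}} w(x) = \mu(C^{\Xsp}_{j,k})$; telescoping the recursion \eqref{e:muj} down to the finest scale (where $\mu_J=\mu$) identifies $\mu(C^{\Xsp}_{j,k})$ with $\mu_j(\{x_{j,k}\})$, so the source marginal constraint holds, and symmetrically for the target marginal. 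Since $\coupling_j^*$ is optimal for \eqref{e:LPformulation_j}, feasibility of $\tilde\coupling_j$ gives $\couplingcost(\coupling_j^*) \le \sum_{k,k'} \cost_j(x_{j,k},y_{j,k'})\,\tilde\coupling_j(x_{j,k},y_{j,k'})$.

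Next I would substitute the definition of $\tilde\coupling_j$ into the right-hand side and write $\cost_j = \cost + (\cost_j-\cost)$ inside each cell block:
\[
\sum_{k,k'} \cost_j(x_{j,k},y_{j,k'})\!\!\sum_{\substack{x\in C^{\Xsp}_{j,k}\\ y\in C^{\Ysp}_{j,k'}}}\!\!\coupling^*(x,y)
= \sum_{x,y}\cost(x,y)\coupling^*(x,y) \;+\; \sum_{k,k'}\sum_{\substack{x\in C^{\Xsp}_{j,k}\\ y\in C^{\Ysp}_{j,k'}}}\!\!\big(\cost_j(x_{j,k},y_{j,k'})-\cost(x,y)\big)\coupling^*(x,y).
\]
The first term on the right is $\couplingcost(\coupling^*)$ and the second is exactly $E_j(\coupling^*)$, which is \eqref{e:UBcouplingcostpij}. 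For \eqref{e:UBcouplingcostpijboundednabld}, take $\cost_j=\cost$, so the bracket becomes $\cost(x_{j,k},y_{j,k'})-\cost(x,y)$, of absolute value at most $||\nabla\cost||_\infty\max\{\rho(x,x_{j,k}),\rho(y,y_{j,k'})\}$ by the Lipschitz hypothesis; since $x,x_{j,k}\in C^{\Xsp}_{j,k}$ and $y,y_{j,k'}\in C^{\Ysp}_{j,k'}$ and each cell has diameter $\le A\cdot2^{-j}$, this is $\le A\cdot2^{-j}||\nabla\cost||_\infty$. Summing against $\coupling^*$ and using $\sum_{x,y}\coupling^*(x,y)=1$ yields $|E_j(\coupling^*)|\le 2^{-j}A||\nabla\cost||_\infty$, hence the claimed bound.

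The whole argument is essentially bookkeeping, so there is no single hard step; the two places that need care are (a) the identification $\mu_j(\{x_{j,k}\})=\mu(C^{\Xsp}_{j,k})$ (and likewise for $\nu$), which is what makes the pushforward feasible and requires telescoping \eqref{e:muj}; and (b) in the Lipschitz case, the claim that the representative $x_{j,k}$ lies in its cell $C^{\Xsp}_{j,k}$ — immediate when $\overline c^{\Xsp}_{j,k}$ is chosen among the cell's own points, whereas for the averaged or Fr\'echet-mean choices one instead bounds $\rho(x,\overline c^{\Xsp}_{j,k})$ directly, which only changes the constant multiplying $2^{-j}$. I expect reconciling that constant with the chosen centering rule to be the only point where the statement as written needs a minor caveat.
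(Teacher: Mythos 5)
Your proof is correct and follows essentially the same route as the paper's: both push $\coupling^*$ forward through the partition to get a feasible coupling at scale $j$, invoke optimality of $\coupling_j^*$, split $\cost_j=\cost+(\cost_j-\cost)$ to identify $E_j(\coupling^*)$, and bound it via the Lipschitz property and the cell diameters. Your extra care in verifying the marginals via the recursion \eqref{e:muj} and your closing caveat about whether the representative $x_{j,k}$ lies in its cell are reasonable refinements of details the paper treats as immediate, but they do not change the argument.
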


\begin{proof}
Consider the coupling $\coupling_j$ induced at scale $j$ by the optimal
coupling $\coupling^*$, defined by 
\begin{equation*}
\coupling_j(x_{j,k},y_{j,k'}) = \sum_{\substack{x\in C^{{\Xsp}}_{j,k}\,,
 y\in C^{{\Ysp}}_{j,k'}}} \coupling^*(x,y)\,.
\end{equation*}
First of all, since $\{C^{{\Xsp}}_{j,k}\}_k$ and $\{C^{{\Ysp}}_{j,k'}\}_{k'}$
are partitions, it is immediately verified that $\coupling_j$ is a coupling. Secondly,
observe that 
\begin{align*}
\couplingcost(\coupling_j)
&=\sum_{\substack{k=1,\dots,K_j^{{\Xsp}}\\ k'=1,\dots,K_j^{{\Ysp}}}} 
  \cost_j(x_{j,k}, y_{j,k'}) \coupling_j(x_{j,k}, y_{j,k'})
=\sum_{\substack{k=1,\dots,K_j^{{\Xsp}}\\ k'=1,\dots,K_j^{{\Ysp}}}} 
  \sum_{\substack{x\in C^{{\Xsp}}_{j,k}\\ y\in C^{{\Ysp}}_{j,k'}}} 
  \cost_j(x_{j,k}, y_{j,k'}) \coupling^*(x,y)\\
&=\sum_{\substack{x\in{\Xsp}\\ y\in{\Ysp}}} \cost(x,y)\coupling^*(x,y)+
  \sum_{\substack{k=1,\dots,K_j^{{\Xsp}}\\ k'=1,\dots,K_j^{{\Ysp}}}} 
  \sum_{\substack{x\in C^{{\Xsp}}_{j,k}\\ y\in C^{{\Ysp}}_{j,k'}}}
  \left(\cost_j(x_{j,k},y_{j,k'})-\cost(x,y)\right) \coupling^*(x,y)\\
&= \couplingcost(\coupling^*) + \underbrace{ \sum_{\substack{k=1,\dots,K_j^{{\Xsp}}\\
   k'=1,\dots,K_j^{{\Ysp}}}} \sum_{\substack{x\in C^{{\Xsp}}_{j,k}\\ y\in C^{{\Ysp}}_{j,k'}}}
  \left(\cost_j(x_{j,k},y_{j,k'})-\cost(x,y)\right) \coupling^*(x,y)}_{=:E_j(\coupling^*)}\,.
\end{align*}
Since $\couplingcost(\coupling^*_j)\le\couplingcost(\coupling_j)$ (since
$\coupling^*_j$ is optimal), we obtain \eqref{e:UBcouplingcostpij}.
When $\cost_j=\cost$ and $\cost$ is Lipschitz with constant $||\nabla\cost||_\infty$, we have
\begin{align*}
E_j(\coupling^*)
&\le \sum_{\substack{k\,:\,x\in C^{{\Xsp}}_{j,k}\\ 
  k'\,:\,y\in C^{{\Ysp}}_{j,k'}}} \sum_{\substack{k=1,\dots,K_j^{{\Xsp}}\\ 
  k'=1,\dots,K_j^{{\Ysp}}}}||\nabla\cost||_\infty\cdot ||(x_{j,k},y_{j,k'})-(x,y)||\coupling^*(x,y)\\
&\le \sum_{\substack{k\,:\,x\in C^{{\Xsp}}_{j,k}\\ 
  k'\,:\,y\in C^{{\Ysp}}_{j,k'}}} \sum_{\substack{k=1,\dots,K_j^{{\Xsp}}\\ 
  k'=1,\dots,K_j^{{\Ysp}}}}||\nabla\cost||_\infty 2^{-j}A\coupling^*(x,y)\\
&\le 2^{-j}A||\nabla\cost||_\infty \,,
\end{align*}
with $A$ as in the claim.
\end{proof}


In the discrete, finite case that we are considering, $\mu_j\rightarrow\mu$ and
$\nu_j\rightarrow\nu$ trivially since $\mu_J=\mu$ and $\nu_J=\nu$ by
construction. If $\mu$ and $\nu$ were continuous, and at least when
$\cost(x,y)=\rho(x,y)^p$ for some $p\ge1$, then if $\mu$ and $\nu$ have finite
$p$-moment (i.e. $\int_\Xsp \rho(x,x_0)^pd\mu$ and similarly for $\nu$), we
would obtain convergence of a subsequence of $\cost(\coupling^*_j)$ to $\cost(\coupling^*)$ by
general results (e.g. as a simple consequence of Lemma 4.4. in ~\citep{villani:book2009}).

Note that the approximations do not guarantee that the transport plans are
close in any other sense but their cost. Consider the arrangement in
Figure~\ref{fig:counter-approximation}, the transport plans are
$\epsilon$-close in cost but the distances between the target locations of the
sources are far no matter how small $\epsilon$ gets.  
\begin{figure}[thb]
\centering
\begin{tabular}{VV}
\includegraphics[width=0.99\linewidth]{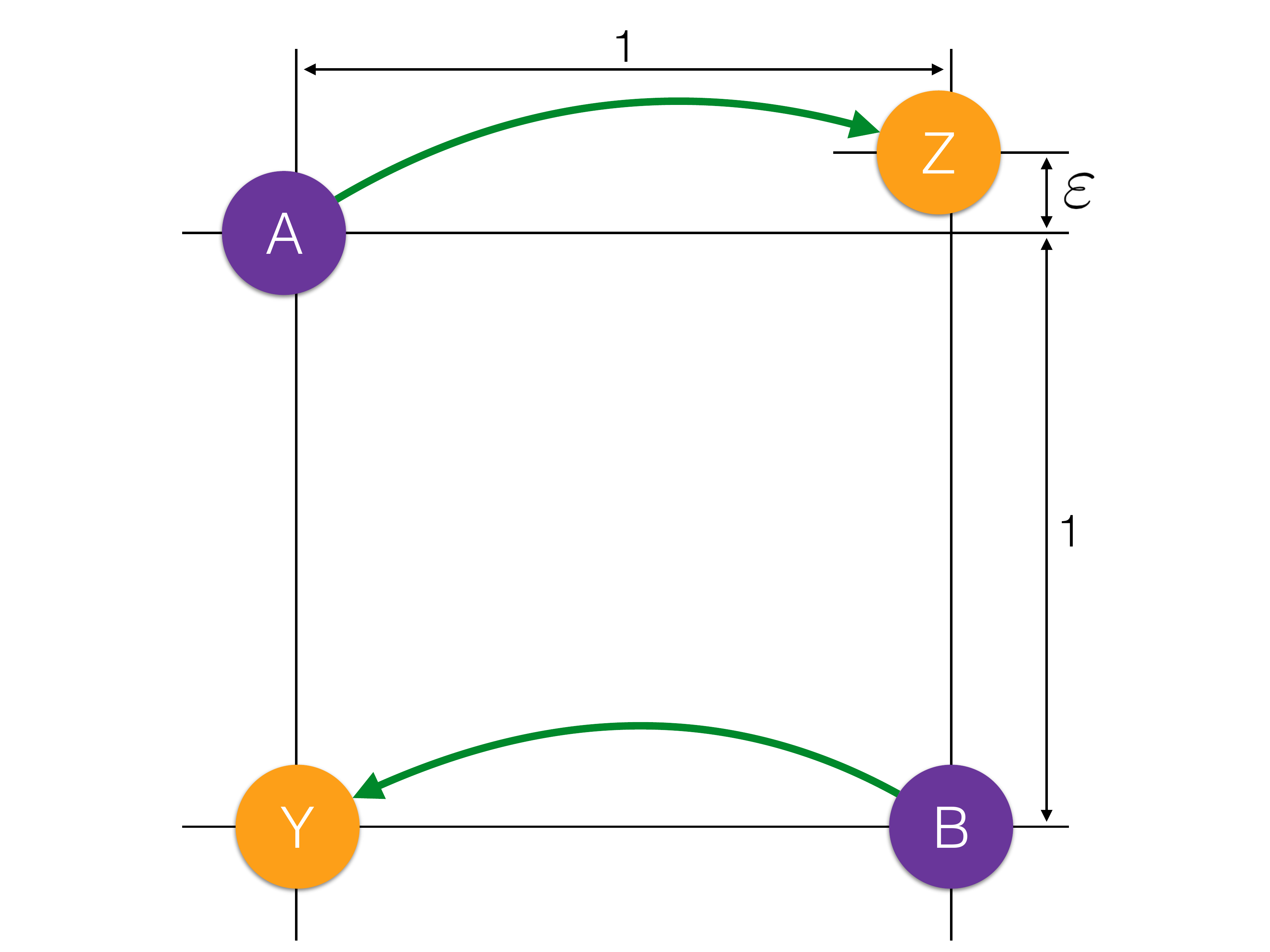} &
\includegraphics[width=0.99\linewidth]{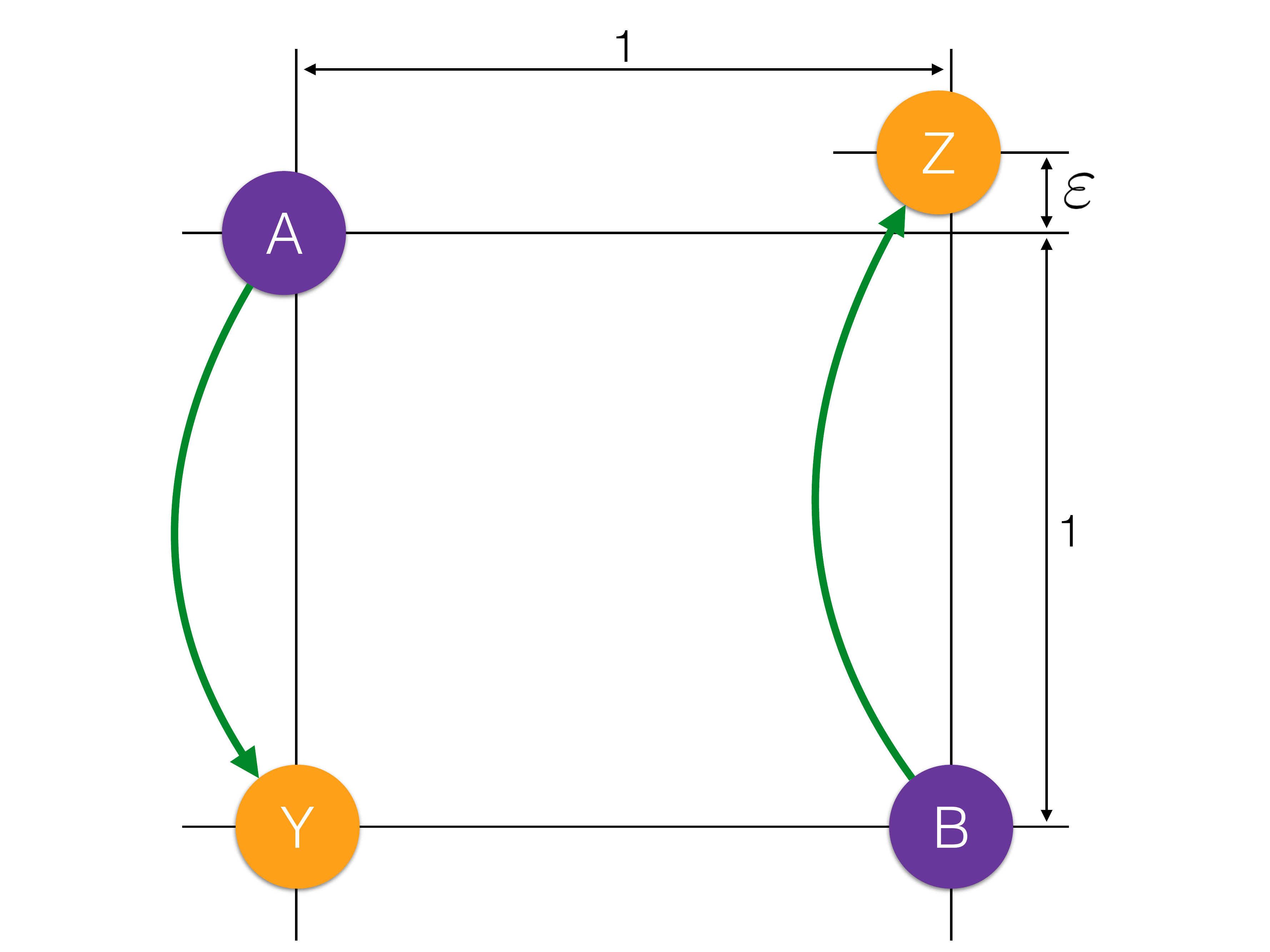} \\
                      (a) & (b)  
\end{tabular}
  \vspace{-0.1in}
\caption{
\label{fig:counter-approximation}
An example that illustrates that closeness in cost does not indicate closeness
of the transport plan. The transport plans (green arrows) between the sources A
and B (purple) and the targets Y and Z (orange)  in (a) and (b) are
$\epsilon$-close but their respective target locations are very far.  } 
\end{figure}

\subsection{Propagation Strategies}
\label{sec:propagation}
The approximation bounds in Section~\ref{sec:multiscaleproblems} show that the
optimal solution $\coupling_{j}$ at scale $j$ is $| E_{j+1} - E_j |$ close to
optimal solution $\coupling_{j+1}^*$ at scale $j+1$. This suggests that the
solution at scale $j$ can provide a reasonably close to optimal initialization
for scale $j+1$.  

As proposed by~\citet{glimm:arxiv2011} the solution at a given scale can be
interpolated at the next scale (or finer discretization).  The most direct
approach to initialize the transport problem at scale $j+1$ given the solution
$\coupling_{j}^*$ at scale $j$ is to distribute the mass $\coupling_{j}^*(x_k,
y_{k'})$ equally to all combinations of paths between $\children(x_k)$ and
$\children(y_{k'})$.  

This propagation strategy results in a reduction in the number of iterations
required to find an optimal solution at the subsequent scale.  This
warm-starting alone is often not sufficient, however. At the finest scale the
problems still requires the solution of a problem of size $O(n^2)$. This
quickly reaches memory constraints with $\Omega(10^4)$ points, and a single
iteration of Newton's method or a pivot step of a linear program becomes
prohibitively slow. Thus, we consider reducing the number of variables, which
substantially speeds up the algorithm, albeit we may lose guarantees on its
computational complexity and/or its ability to achieve arbitrary accuracy, so
that only numerical experiments will support our constructions.  These
reductions are achieved by considering only a subset $\mathbf{R}_{j+1}$ of all
possible paths $\mathbf{A}_{j+1}$ at scale $j+1$.

To distinguish the optimal solution on the reduced set of paths at scale $j$
from the optimal solution over all paths we introduce some notation. Let
$\mathbf{A}_{j+1}$ be the set of all possible paths between sources and targets
at scale $j+1$. Let $\mathbf{R}_{j+1} \subseteq \mathbf{A}_{j+1}$ be the set of
paths propagated from the previous solution (e.g. children of mass-bearing
paths found at scale $j$).  Let $\coupling^*_{j+1}  |_{\mathbf{P}}$ be the
optimal solution to the transport problem restricted to a set of paths
$\mathbf{P} \subset \mathbf{A}_{j+1}$.  With this notation, $ \coupling_{j+1}^*
|_{\mathbf{A}_{j+1}} = \coupling_{j+1}^*$.  The optimal coupling
$\coupling_{j+1}^*|_{\mathbf{R}_{j+1}}$ on the reduced set of paths problem does
not need to match the optimal coupling $\coupling_{j+1}^*$ on all paths.
However $ \coupling_{j+1}^* |_\mathbf{R_{j+1}}$ does provide a starting point
for further refinements discussed in Section~\ref{sec:refinement}.

\subsubsection{Simple Propagation}
\label{sec:simple}
The most direct approach to reduce the number of paths considered at subsequent
scales is to include only paths at scale $j+1$ whose endpoints are children of
endpoints of mass-bearing paths at scale $j$. The optimal solution at scale $j$
has exactly $K_j^{\Xsp} + K_j^{\Ysp} - 1$ paths with non-zero weight. Thus, the
number of paths at scale $j+1$ reduces to $O( C^2(K_{j}^{\Xsp} + K_j^{\Ysp}))$,
where $C$ is the maximal number of children of any node at scale $j$. In
particular, $C\asymp 2^{d}$ for a doubling space of dimension $d$. This reduces
the number of variables from ``quadratic'', $O(K_{j}^{\Xsp} K_j^{\Ysp} )$, to
linear,  $O(K_{j}^{\Xsp}+K_j^{\Ysp} )$.  This propagation strategy by itself,
however, often leads to a dramatic loss of accuracy in both the cost the
transportation plan, and the transportation plan itself.

\subsubsection{Capacity Constraint Propagation}
\label{sec:capacity}
This propagation strategy solves a modified minimum flow problem at scale $j$
in order to include additional paths at scale $j+1$ that are likely to be
included in the optimal solution $\coupling_{j+1}^*$. This is achieved by
adding a capacity constraint to the mass bearing paths at scale $j$ in the
optimal coupling  $ \coupling_j^* |_{\mathbf{R}_j}$: The amount of mass of a
mass bearing path $ \coupling_j^* |_{\mathbf{R}_j}( x_{j,k}, y_{j,k'} )$ is
constrained to $\lambda \min \left ( \mu_j(x_{j,k}), \nu_j(y_{j,k'})  \right)$
with $\lambda$ random uniform on $[0.1, 0.9]$. The randomness is introduced to
avoid degenerate constraints.  The solution of this modified minimum--flow
problem forces the inclusion of $n_c$ additional paths, where $n_c$ is the
number of constraints added. There are various options for adding capacity
constraints, we propose to constrain all mass bearing paths of the optimal
solution at scale $\coupling_j^*|_{\mathbf{R}_j}$. The capacity constrained
problem thus results in a solution with twice the number of paths as in the
coupling $\coupling_j^*|_{\mathbf{R}_j}$. The solution of the capacity
constrained minimum flow problem is propagated as before to the next scale.

To increase the likelihood of including paths required to find an optimal
solution at the next scale, the capacity constrained procedure can be iterated
multiple times.  Each time the mass bearing paths in the modified solution are
constrained and a new solution is computed. Each iteration doubles the number
of mass bearing paths and the number of iterations controls how many paths are
propagated to the next scale. Thus, the capacity constraint propagation
strategy bounds the number of paths considered in the linear program. The
optimal transport plan on a source set $\Xsp$ and $\Ysp$ results in a linear
program with $|\Xsp| + |\Ysp|$ constraints and $|\Xsp| |\Ysp|$ variables and
the optimal transport plan has $|\Xsp| + |\Ysp| - 1$  mass bearing paths. It
follows that the capacity constraint propagation strategy considers linear
programs with at most $O\left( 2^i (|\Xsp| + |\Ysp|) \right)$ constraints,
where $i$ is the number of iterations of the capacity propagation scheme. This
results in a significant reduction in problem size, since at each scale we only
consider a number of paths scaling as $O(|\Xsp|+|\Ysp|)$ instead of $O(|\Xsp|
|\Ysp|)$.

\subsection{Refinement Strategies}
\label{sec:refinement}
Solving the reduced transport problem at scale $j+1$, propagated from scale
$j$, can not guarantee an optimal solution at scale $j+1$. Propagating a
sub-optimal solution further may lead to an accumulation of errors. This
section describes strategies to refine the reduced transport problem to find
closer approximations or even optimal transport plans at each scale.  These
refinement strategies are essentially batch column generation
methods~\citep{desaulniers:book2002}, that take advantage of the multiscale
structure.

\subsubsection{Potential Refinement}
\label{sec:potential}
This refinement strategy exploits the potential functions, or dual solution, to
determine additional paths to include given the currently optimal solution on
the reduced set of paths from the propagation. The dual formulation of the
optimal transport can be written as:
\begin{equation}
  \max_{\phi, \psi} 
  \sum_{i=1,\dots,n} \mu(\{x_i\}) \phi(x_i) - 
  \sum_{j=1,\dots,m} \nu(\{y_j\}) \psi(y_j)
  \quad \text{s.t.} \quad 
  \phi(x_i) - \psi(y_j) \leq \cost(x_i, y_j) \,.
\label{e:DualLPformulation}
\end{equation}
The functions $\phi$ and $\psi$ are called dual variables or potential
functions.  From the dual formulation it follows that at an optimal solution
the {\em reduced cost} $\cost(x, y) - \phi(x) + \psi(y)$ is larger or equal to
zero. This also follows from the Kantorovich duality of optimal
transport~\citep[Chapter~5]{villani:book2009}.

The potential refinement strategy uses the potential functions $\phi$ and
$\psi$ from the solution of the reduced problem to determine which additional
paths to include. If the solution on the reduced problem is not optimal on all
paths, then there exist paths with negative reduced cost. Thus, we check the
reduced cost between all paths and include the ones negative reduced cost.  A
direct implementation of this strategy would require to check all possible
paths between the source and target points. To avoid checking all pairwise
paths between source and target point sets at the current scale, we introduce a
branch and bound procedure on the multiscale structure that efficiently
determines all paths with negative reduced cost. 

Let $ \phi_j^* |_\mathbf{P}$ and $ \psi_j^*|_\mathbf{P}$ the dual variables,
i.e., the potential functions, at the optimal solution
$\coupling_j^*|_\mathbf{P}$ on the set of paths $\mathbf{P}$ for the source and
target points, respectively. Define $\mathbf{V}_j( \coupling_j^*|_\mathbf{P} )$
as the set of paths with non-positive reduced cost with respect to $ \phi_j^*
|_\mathbf{P}$ and $\psi_j^* |_\mathbf{P}$
$$
\mathbf{V}_j( \coupling_j^* |_\mathbf{P}) = \left \{ \coupling_j(x_k, y_{k'})
\in \mathbf{A}_j : \cost(x_k, y_{k'}) -  \phi_j^* |_\mathbf{P}(x_k) - \psi_j^*
|_{\mathbf{P}}(y_{k'}) \le 0 \right \} \, .
$$
The {\em potential} refinement strategy now refines the propagated solution
$\coupling_j^*|_{\mathbf{R}_j}$ by including all negative reduced cost paths
$\mathbf{Q}_j^0 = \mathbf{V}_j( \coupling_j^* |_{\mathbf{R}_j})$.
Let $\coupling_j^*|_{\mathbf{Q}_j^0}$ be the associated optimal transport. This
new solution changes the potential functions which in turn may require to
include additional paths. Thus the potential refinement strategy can be
iterated with $\mathbf{Q}_j^i = \mathbf{V}_j(
\coupling_j^*|_{\mathbf{Q}_j^{i-1}})$ leading to monotonically decreasing
optimal transport plans $\coupling_j^*|_{Q_j^i}$. Since a solution is optimal
if and only if all reduced cost are $\ge 0$ this iterative strategy converges
to the optimal solution.  

The set of paths with negative reduced cost given $ \phi_j^*|_\mathbf{P}$ and $
\psi_j^* |_\mathbf{P}$ are determined by descending the tree and excluding
nodes that can not contain any negative reduced cost paths. This requires
bounds on the potential functions for any node at scale smaller than $j$. The
bound is achieved by storing at each target node of the multiscale
decomposition the maximal $  \psi_j^* |_\mathbf{P}$ of any of its descendants
at scale $j$. Now for each source node $x_k$ the target multiscale
decomposition is descended, but only towards nodes at which a potential
negative reduced cost can exist.

Depending on the properties of $ \phi_j^* |_\mathbf{P}$ and $\psi_j^*
|_\mathbf{P}$, this refinement strategy may reduce the number of required cost
function evaluations drastically. In empirical experiments the total number of
paths considered in the iterative potential refinement was typically reduced to
$O( 2^d ( K_j^\Xsp + K_i^\Xsp) )$, with $d$ being the doubling dimension of the
data set. 

The potential refinement strategy is also employed by~\citet{glimm:arxiv2011}
without the branch and bound procedure. The shielding neighborhoods proposed
by~\citet{schmitzer2015sparse} similarly uses the potentials to define which
variables are needed to define a sparse optimal transport problem and also
suggests to iterate the neighborhood shields in order to arrive at an optimal
solution.

\subsubsection{Neighborhood Refinement}
\label{sec:neighborhood}
This section presents  a refinement strategy that takes advantage of the
geometry of the data. The approach is based on the heuristic that most paths at
the next scale are sub-optimal due to boundary effects when moving from one
scale to the next, induced by the sharp partitioning of the space at scale $j$.
Such artifacts from the multiscale structure are mitigated by including paths
between spatial neighbors in the source and target locations of the optimal
solution on the propagated paths. This refinement strategy is also employed
by~\citet{oberman2015efficient}.

Let $\mathbf{N}_j\left(\coupling_j, r\right)$ be the set of paths such that the
source and target of any path are within radius $r$ of the source and target of
a path with non-zero mass transfer in $\coupling_j$.  The neighborhood
refinement strategy is to expand the reduced set of paths using the union of
paths in the current reduced set and its neighbors:
$$
\mathbf{E}_j = \mathbf{R}_j \cup \mathbf{N}_j \left( \coupling_j^*|_{\mathbf{R}_j}, r\right) \,
$$  
When moving from one scale to the next the cost of any path can change at most
two times the radius $r$ of the parent node which suggests to set the radius of
the neighborhood in consideration as two times the node radius. 

This heuristics does not guarantee an optimal solution, but does reduce the
number of paths to consider at scale $j$ to $O( q_r^2 (K_j^{{\Xsp}} +
K_j^{{\Ysp}}))$ with $q_r$ being the number of neighbors within radius
$2^{-j}r$, for a doubling space with dimension $d$, $q_r\asymp r^d$.

The neighborhood strategy requires to efficiently compute the set of points
within a ball of a given radius and location. Depending on the multiscale
structure there are different ways to compute the set of neighbors. A generic
approach that does not depend on any specific multiscale structure is to use a
branch and bound strategy. For this approach, each node requires an upper bound
$u(c_i)$ on the maximal distance from the representative of node $c_i$ to any of
its descendants. Using these upper bounds the tree is searched, starting from
the root node, while excluding any child nodes for which $\cost(x, c_i) - u(c_i) >
r$ from further consideration.  For multiscale structures such as regular grids
more efficient direct computations are possible. For this paper we implemented
the generic branch and bound strategy that works with any multiscale structure.

\subsection{Remark on Errors}
The error induced by this multiscale framework stems from two sources.  First, if
$J_0<J$, i.e., the optimal transport problem is only solved up to scale $J_0$,
the solution at scale $J_0$ has a bounded approximation error, as detailed in
Section~\ref{sec:multiscaleproblems}.  By solving the transport problem only up
to a certain scale permits to trade-off computation time versus precision with
guaranteed approximation bounds.  However, to speed up computation we rely on
heuristics that, depending on the refinement strategy, yield solutions at each
scale that might not be optimal. This second type error is difficult to
quantify; however, for the potential refinement strategy that we introduce
Section~\ref{sec:potential}, an optimal solution can be guaranteed.  The
propagation and refinement strategies introduced in
Sections~\ref{sec:propagation} and~\ref{sec:refinement} permit trade-offs
between accuracy and computational cost.

\section{Numerical Results and Comparisons}
\label{sec:results}
We tested the performance of the proposed multiscale transport framework with
respect to computational speed as well as accuracy for different propagation
and refinement strategies on various source and target point sets with
different properties.

The multiscale approach is implemented in C++ with an \textsf{R} front end in
the package {\em mop}\footnote{available on
\url{https://bitbucket.org/suppechasper/optimaltransport}}. For comparisons, we also
implemented the Sinkhorn distance approach~\citep{cuturi:nips2013} in C++ with
an \textsf{R} front end in the package {\em
sinkhorn}\footnotemark[\value{footnote}].  Our C++ implementation uses the
Eigen linear algebra library~\citep{eigen}, which resulted in faster runtimes
than the MATLAB implementation by~\citet{cuturi:nips2013}.

\subsection{Implementation Details}
The \textsf{R} package  {\em mop} provides a flexible implementation of the
multiscale framework and is adaptable to any multiscale decomposition that can
be represented by a tree. The package permits to use different optimization
algorithms to solve the individual transport problem.  Currently the multiscale
framework implementation supports the open source library GLPK~\citep{glpk} and
the commercial packages  MOSEK~\citep{mosek} and CPLEX~\citep{cplex} (both free
for academic use). MOSEK and CPLEX support a specialized network simplex
implementation that runs 10-100 times faster than a typical primal simplex
implementation. Both the MOSEK and CPLEX network simplex run at comparable
times, with CPLEX slightly faster in our experiments.  Furthermore CPLEX
supports starting from an advanced initial basis for the network simplex
algorithm which improves the multiscale run-times significantly.  Thus, the
numerical test are all run using the CPLEX network simplex algorithm.

\subsubsection{Algorithms for Constructing Multiscale Point Set Representations}
\label{s:multiscalepointsets}
Various approaches exist to build the multiscale structures described
in Section~\ref{sec:coarsening}, such as hierarchical clustering
type algorithms~\citep{ward:jasa1963}, or in low dimensions constructions such
as quad and oct-trees~\citep{finkel:ai197,jackins:cgip1980} or
kd-trees~\citep{bentley:acm1975} are feasible. Data structures developed for
fast nearest neighbor queries, such as navigating
nets~\citep{krauthgamer:soda2004} and cover trees~\citep{beygelzimer:icml2006}
induce a hierarchical structure on the data sets with guarantees on partition
size and geometric regularity of the elements of the partition at each scale,
under rather general assumptions on the distribution of the points.  The
complexity of cover trees~\citep{beygelzimer:icml2006} is $O(C^d D n\log n)$,
for some constant $C$, where $n=|{\Xsp}|$, $d$ is the doubling dimension of
${\Xsp}$, and $D$ is the cost of computing a distance between a pair of points.
Therefore the algorithm is practical only when the intrinsic dimension is
small, in which case they are provably adaptive to such intrinsic dimension.
The optimal transport approach does not rest on a specific multiscale structure
and can be adapted to application-dependent considerations. However, the
properties of the multiscale structure, i.e., depth and partition sizes, do
affect run-time and approximation bounds.

In our experiments we use an iterative $K$-means strategy to recursively split
the data into subsets. The tree is initialized using the mean of the complete
data set as the root node.  Then K-means is run resulting in $K$ children. This
procedure is recursively applied for each leaf node, in a breadth first
fashion, until a desired number of of leaf nodes, a maximal leaf radius or the
leaf node contains only a single point. For the examples shown we select $K =
2^d$ with $d$ the dimensionality of the data set. For high-dimensional data $K$
could be set to an estimate of the intrinsic dimensionality. Since in all
experiments we are equipped with a metric structure we use the pointwise
coarsening of the cost function as in Equation~\eqref{e:costpointwise}. The
reported results include the computation time for building the hierarchical
decomposition, which is, however, negligible compared to solving the transport
problem at all scales.

\subsubsection{Multiscale Transport Implementation for Point Sets}  
Algorithm~\ref{f:algo-detailed} details the steps for computing multiscale
optimal transport plans using the newtork simplex for solving the optimization
problems at each scale and iterated $K$--means to construct the multiscale
structures.  
\begin{algorithm}[thb]
  \KwIn{ Source point set $\Xsp = \{x\}_{i=0}^N$.

         Target point set $\Ysp = \{y\}_{i=0}^M$. 

         A propagation strategy $p$. 

         A refinement strategy $r$.
       } 
 \KwOut{Multiscale family of transport plans
   $(\coupling_j: \Xsp_j\rightarrowtail \Ysp_j)_{j=0}^{J}$}

Construct multiscale point sets $\{ \Xsp_j \}_{j=0}^N$ and
$\{ \Ysp_j \}_{j=0}^M$ using iterated K--means. 

\If{ $N < M$ }{
  Add scales $\{ \Xsp_j \}_{j=N+1}^J$ by repeating the last scale.
}  
\If{ $N > M$ }{
  Add scales $\{ \Ysp_j \}_{j=M+1}^J$ by repeating the last scale.
}  

Set $J = \max(N, M)$
  
Form the measures $\mu_j$ and $\nu_j$ as in equation~\eqref{e:muj}.

Compute optimal transport $\coupling_0$ at coarsest scale with the network simplex.

 \For{ $j=1\dots J$ }{
     Propagate the $\coupling_{j-1}$ from scale $j-1$ to scale $j$ using the
     propagation strategy $p$, obtaining a set of paths $\mathbf{S}_j =
     p(\coupling_{j-1}$ at scale $j$ 
     \vspace{0.1in}

     Use the network simplex algorithm to solve for optimal transport on the
     set of paths $\mathbf{S}_j$ yielding a coupling $\tilde \coupling_j$ 
   
     Create the refined set of paths $\mathbf{R}_j = r( \tilde \coupling_j )$
     using the refinement strategy $r$.  
   
     Use the network simplex algorithm to solve for optimal transport on the
     set of paths $\mathbf{R}_j$ yielding the optimal coupling $\coupling_j$ on
     the paths $\mathbf{R}_j$.

 }
      
\caption{Point Set based Multiscale Optimal Transport Implementation
 \label{f:algo-detailed}
}
\end{algorithm}

\subsection{Propagation and Refinement Strategy Performance}
Figure~\ref{fig:strategies} illustrates the behaviour of the different
propagation and refinement strategies on two examples: The ellipses example in
Figure~\ref{fig:multiscale-strategy} and Caffarelli's smoothness counter example
described in~\citet[Chapter~12]{villani:book2009} and illustrated in
Figure~\ref{fig:datasets}. The ellipse example consists of two uniform samples
(source and target data set) of size $n$ from the unit circle with normal
distributed noise added with zero mean and standard deviation $0.1$. The source
data sample is then scaled in the x-Axis by $1.3$ and in the y-Axis by $0.9$,
while the target data set is scaled in the x-Axis
by 0.9 and in the y-Axis by $1.1$. 
Caffarelli's example consists of two uniform samples on $[-1,1]^2$ of size $n$.
Any points outside the unit circle are then discarded. Additionally, the target
data sample is split along the x-Axis at $0$ and shifted by $+2$ and $-2$ for
points with positive and negative x-Axis values, respectively. 
\begin{figure}[htb]
\centering
\begin{tabular}{VV}
\includegraphics[width=0.9\linewidth]{scale-9} & 
\includegraphics[width=0.9\linewidth]{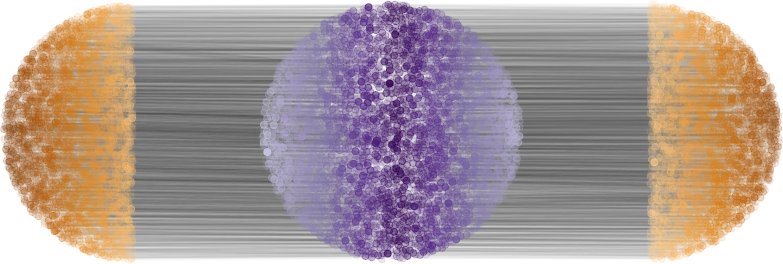}\\
                      (a) & (b)  
\end{tabular}
\vspace{-0.2in}
\caption{
\label{fig:datasets}
Optimal transport plans on the (a) ellipse and (b) Caffarelli data sets with
5000 points for the source and target point set each. The optimal transport
plans are indicated by black lines with transparency indicating the amount of
mass being transported from source to target. } 
\end{figure}
\vspace{-0.2in}
We tested 7 different strategies of different combinations of propagation
strategies from Section~\ref{sec:propagation} with refinement strategies from
Section~\ref{sec:refinement}:
\begin{compactenum}
  \item (ICP) Iterated capacity propagation (Section~\ref{sec:capacity}) with
    $0$ to $5$ iterations with no refinements. Note iterated capacity with $0$
    iterations is equivalent to simple propagation (Section~\ref{sec:simple}).
  \item (NR) Simple propagation (Section~\ref{sec:simple}) with neighborhood
    refinement (Section~\ref{sec:neighborhood}) with radius factor ranging from
    $0.5$ to $2.5$ in $0.5$ increments.
  \item (ICP + NR) Iterated capacity propagation with $1$ to $5$ iterations
    combined with neighborhood refinement with radius factor fixed to $1$.
  \item (CP + NR) A single iteration of capacity propagation combined with
    neighborhood refinement with radius factor from $0.5$ to $2.5$ with $0.5$
    increments.
  \item (IPR) Simple propagation with iterated potential refinement
    (Section~\ref{sec:potential}) with $1$ to $5$ iterations.  
  \item (ICP + PR) Iterated capacity propagation with $1$ to $5$ iterations
    combined with a single potential refinement step.
  \item (CP + IPR) A single iteration of capacity propagation combined with
    $1$ to $5$ iterations of potential refinement.
\end{compactenum}

Figure~\ref{fig:strategies} shows that almost all strategies have less than one
percent relative error. The exception is the simple propagation with no
refinements applied (i.e. iterated capacity propagation with no iterations).
The randomness of the iterated capacity constrained algorithm can results in worse
results despite a larger number of iterations. However, it will always perform better than
using the simple propagation strategy.  The neighborhood refinement strategy
improves the results significantly, but after a radius factor of one the improvements start to
level out.  The potential refinement strategy finds the optimal solution when
iterated a few times.  Combining the refinement strategy with capacity
propagation reduces relative error and computation time.  The computation time
is reduced because the capacity propagation yields a closer initialization to
the optimal solution. The combination of potential refinement strategy and capacity propagation has
the additional benefit that the branch and bound strategy is more efficient
since fewer comparisons need to be made when checking for paths and smaller
linear programs have to be solved in each refinement step.
\begin{figure}[htb]
\centering
\begin{tabular}{VV}
\includegraphics[width=0.85\linewidth]{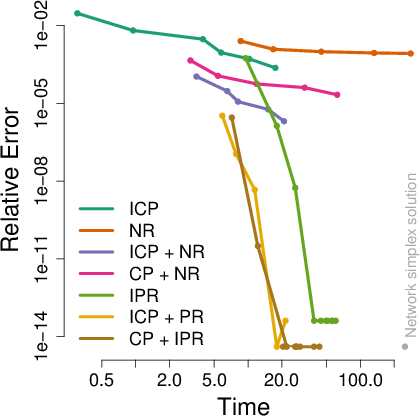} &
\includegraphics[width=0.85\linewidth]{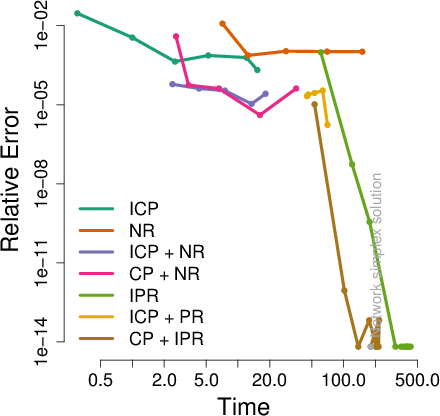} \\
                      (a) & (b)  
\end{tabular}
  \vspace{-0.15in}
\caption{
\label{fig:strategies}
A comparison of different multiscale propagation and refinement strategies on
the (a) ellipse and (b) Caffarelli data sets with 5000 points for the source and
target point set each. For a description of the different strategies see text,
the lines in the relative error graphs indicate increasing number of iterations
or radius factor for the different strategies.  Both the time and accuracy axes
are in logarithmic scale. All strategies, except capacity propagation with $0$
iterations (ICP) and neighborhood refinement (NR) with radius factor 0.5, find
solution with relative error less than one percent. The capacity propagation
strategies can find optimal solutions up to numerical precision.} 
\end{figure}
  \vspace{-0.15in}
For very small relative error $( < 10^{-11} )$ the optimal solution is
sometimes not achieved. This is due to tolerance settings in the network
simplex algorithm which are on the order of $10^{-11}$. Thus, depending how the
network simplex approaches the optimum it might stop at different basic
feasible solutions.  

Figure~\ref{fig:strategies} shows that the computation time and relative
error for problem of equal sizes depends on the type of problem. An important
aspect of the problem type is the ratio of the transport distance to the diameters
of the source and target point sets. To illustrate this effect we computed
optimal transport plans for two data sets with source and target
distributions uniform on $[0,1]^2$. In the first case the distributions are
perfectly overlapping and in the second case the target distributions shifted
by 2 units in the x direction.  Figure~\ref{fig:distance-strategies} shows that
for large ratios the relative error is typically much smaller.  This is
expected since variations in the transport plan only change the cost
marginally. For small ratios, i.e.  source and target distributions that are
almost identical, a small variation in the transport plan leads to a large
relative error.
\begin{figure}[htb]
\centering
\begin{tabular}{cc}
                       \vspace{-0.3in} \\
\includegraphics[width=0.47\linewidth]{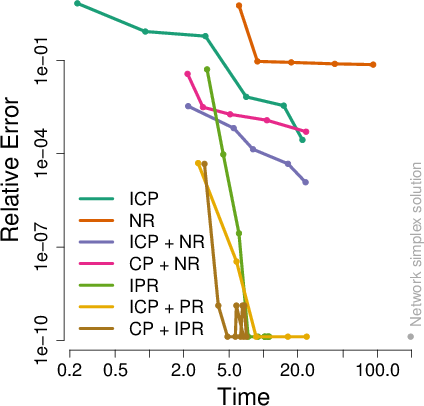} &
\includegraphics[width=0.47\linewidth]{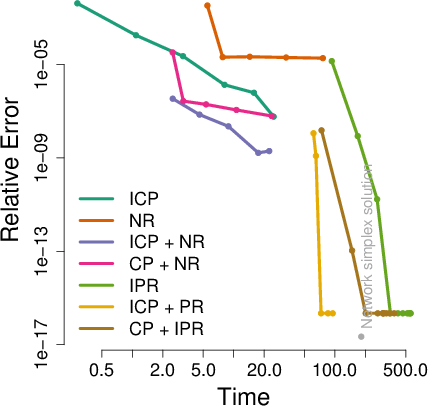} \\
                       \vspace{-0.2in} \\
                       (a) & (b)  
\end{tabular}
  \vspace{-0.15in}
\caption{
\label{fig:distance-strategies}
The effect of distance to support size ratio on computation time and relative
error for the different multiscale propagation and refinement strategies source
and target point sets of $5000$ points each sampled uniformly from a square of
side length one and ground truth transport distance (a) $0$ and (b) $2$.  Both
the time and accuracy axes are in logarithmic scale.  } 
\end{figure}
Another observation is the following: the potential strategy is less effective
for transport problems where most mass is transported very far, relative to the
distances within source and target point set. This is because a small change in
the length of a path can include many possible source and target locations and
the transport polytope has many suboptimal solutions with similar cost. If on
the other hand most mass is transported on the order of the nearest neighbor
distances within the source and target point set, there are many fewer possible
paths within a small change in path length and the transport polytope is
``steep'' in the direction of the cost function.

\subsection{Comparison to Network Simplex and Sinkhorn Transport}
In this section we compare the CPLEX network simplex algorithm~\citep{cplex}
and the Sinkhorn approach~\citep{cuturi:nips2013}  to three different
multiscale strategies: 
\begin{compactenum}
\item (CP)  Capacity propagation strategy using a single iteration with no
  further refinements.  
\item (CP + NR) Capacity propagation combined with neighborhood refinement
    with radius factor fixed to one and a single capacity constraint iteration.
\item (CP + PR) Capacity propagation combined with potential refinement with a
  single iteration each.
\end{compactenum}

\begin{figure}[htb]
\centering
\begin{tabular}{cc}
  \includegraphics[width=0.65\linewidth]{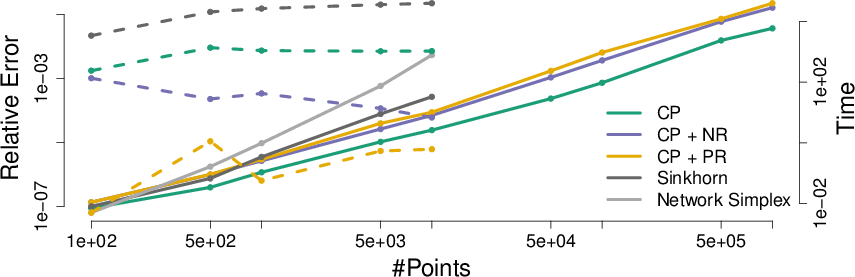} &
\includegraphics[width=0.28\linewidth]{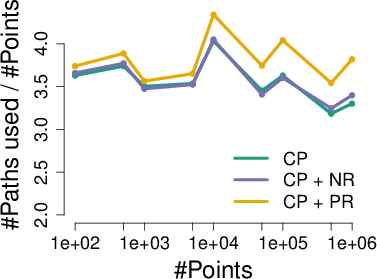} \\
(a) & (b)
\end{tabular}
\caption{
\label{fig:npoints}
(a) A comparison of computation time (solid) and relative error ( dashed ) with
respect to the network simplex solution for the CPLEX network simplex, Sinkhorn
distance and the proposed multiscale strategy with increasing number of points.
(b) The number of total paths considered is roughly constant with four times
the number of points in the problem, i.e., the multiscale approach results in
only a linear increase in problem size instead of quadratic for a direct
approach.  The number of points on the x-axis denotes the number of source
points $|\Xsp|$ which is approximately equal to the number of target points
$|\Ysp|$.  The Sinkhorn approach is competitive in computation time for smaller
problems.  For larger problems only the multiscale strategies outperform the
Sinkhorn approach quickly, and are the only algorithms that remain viable.
} 
\end{figure}
Figure~\ref{fig:npoints} shows computation time, relative error and problem
size for increasing sample size on the ellipse data set. The Sinkhorn transport
employs a regularization term and is thus not expected to converge to the
actual transport distance. The comparison of the relative error provides an
indication of how far the regularization strays from the true transport
distance. For small size problems the Sinkhorn transport is as comparable in
speed to the multiscale approach.. The multiscale approach outperforms both
competitors for moderate data sizes of a few thousand source and target points.
The multiscale approach scales roughly linear in the problem size while both
the Sinkhorn and network simplex scale approximately quadratically. The
capacity propagation without refinement runs an order of magnitude faster than
including refinements and results in relative errors less than one percent for
more than a few hundred points.  The network simplex and Sinkhorn approach run
out of memory on a 16GB computer for problems larger than around $2\cdot10^4$
constraints and about $10^8$ variables. The proposed multiscale framework
results in a linear increase in problem size for the propagation and refinement
strategies tested and, on the same 16GB computer, can solve instances with
$2\cdot10^6$ constraints (source and target points) in a few hours, which would
result in about $10^{12}$ variables (paths) for the full optimal transport
problem.  The computation times are comparable to the results reported
by~\citet{oberman2015efficient,schmitzer2015sparse} on examples on regular grids
that result in similar sized problems. 

The relative error and computation time depend again on the type of problem.
Figure~\ref{fig:dimensions}(a) shows computation time and relative error
on transport problems on source and target data sets sampled uniformly from
$[0,1]^2$ with different shifts in the target distribution. If required, the
relative error can be reduced using multiple capacity propagation and potential
refinement iterations as illustrated in Figure~\ref{fig:distance-strategies}.
In these experiments we set the tolerance parameter for Sinkhorn to $1e^{-5}$, and
tried also $1e^{-2}$ to see if this would result in significant speed ups, with
minimal accuracy loss, but it resulted in only approximately a $10\%$ speedup,
and no fundamental change in the behavior for large problem sizes. The
tolerance parameter for CPLEX is set to $1e^{-8}$. The approach by
~\citet{schmitzer2015sparse} is similar to the refinement property strategy, we
expect that computation time grows similarly as for the potential strategy for
problems with large transport distances compared to the source and target
support size.

The final experiment tests the performance with respect to the dimensionality of
the source and target distributions. We used two uniform distributions $[0,1]^d$
with $d$ varying from $2$ to $5$ and the target shifted such that the actual
transport distance is 0 and 2.
\begin{figure}[htb]
\centering
\begin{tabular}{ccc}
\includegraphics[width=0.32\linewidth]{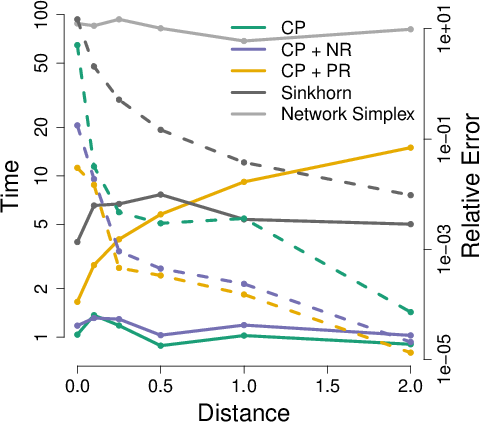} &
\includegraphics[width=0.32\linewidth]{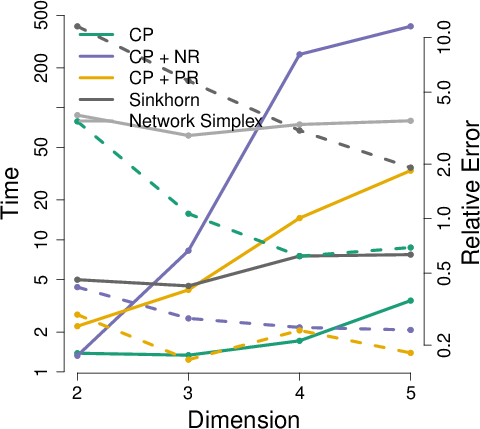} &
\includegraphics[width=0.32\linewidth]{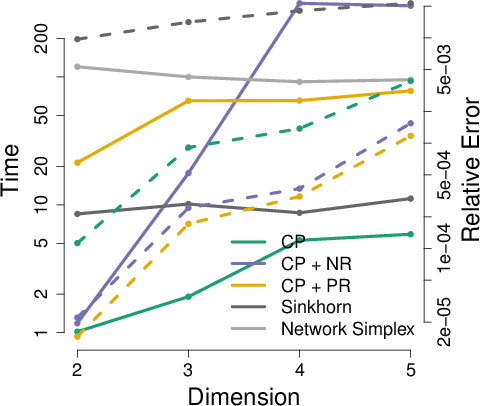} \\
                       \vspace{-0.2in} \\
                       (a) & (b)  & (c)
\end{tabular}
  \vspace{-0.1in}
\caption{
\label{fig:dimensions}
Computation time (solid) and relative error (dashed) with respect to (a)
changes transport distance to support size ratio in two dimensions and (b,c)
dimensionality with ground truth transport distance (b) 0 and (c) 2. The source
and target distributions are uniform on a square of side length 1 with
approximately 5000 points each. The potential refinement increases proportional
to the transport distance to support ratio. The neighborhood strategy is less
effective in higher dimensions, due to the curse of dimensionality, but
performs better than the Sinkhorn approach. The capacity propagation strategy
is less affected by the dimensionality of the problem. }  
\end{figure}

\section{Application to Brain MR Images}
\label{s:brainMR}
An important building block for the analysis of brain MRI populations is the
definition of a metric that measures how different two brain MRI are. A
mathematically well motivated and popular approach for distance computations
between brain images is based on large deformation diffeomorphic metric mappings
(LDDMM~\citep{miller2002metrics}). Here we explore optimal transport distance
as an alternative metric for comparing brain images.

To solve for the optimal transport map between two 3D brain images we extract
for each image a point cloud from the intensity volumes. Each point represents
a voxel as a point in 3-dimensional space, the location of the voxel. The mass
of the point is equal to the intensity value of the voxel, normalized to sum to
one over all points.  For illustration, Figure~\ref{fig:brains} shows a single
slice extracted from the original volumes and optimal transport maps between
the two slices. This 2D problem resulted in point set of approximately $20,000$
points. 
\begin{figure}[bth]
\centering
\begin{tabular}{cc}
                       \vspace{-0.1in} \\
\includegraphics[height=0.375\linewidth,angle=0]{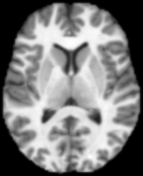} &
\includegraphics[height=0.375\linewidth,angle=0]{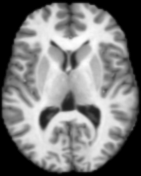} \\
                       \vspace{-0.1in} \\
                   (a) & (b) \\
\includegraphics[width=0.375\linewidth,angle=90]{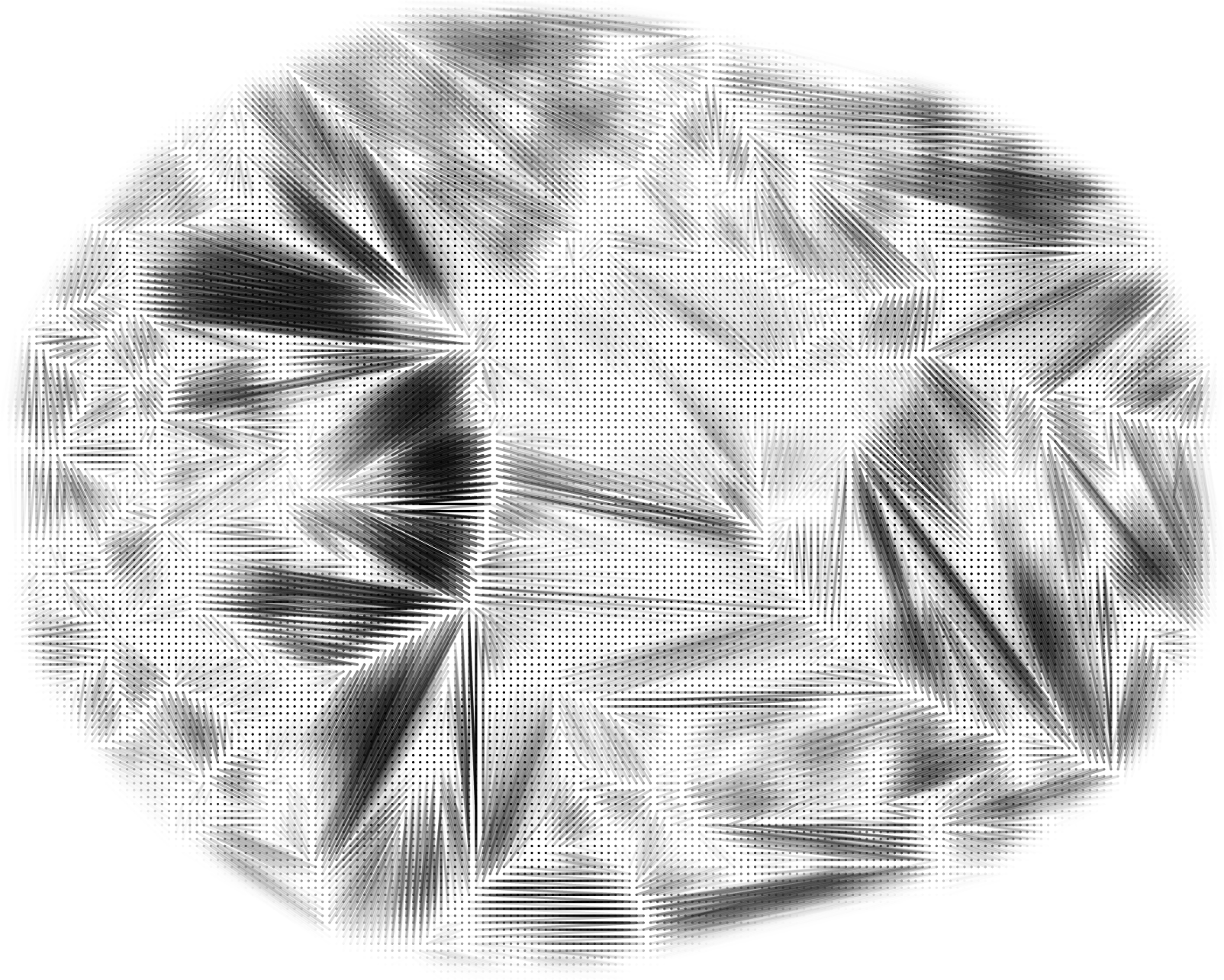} &
\includegraphics[width=0.375\linewidth,angle=90]{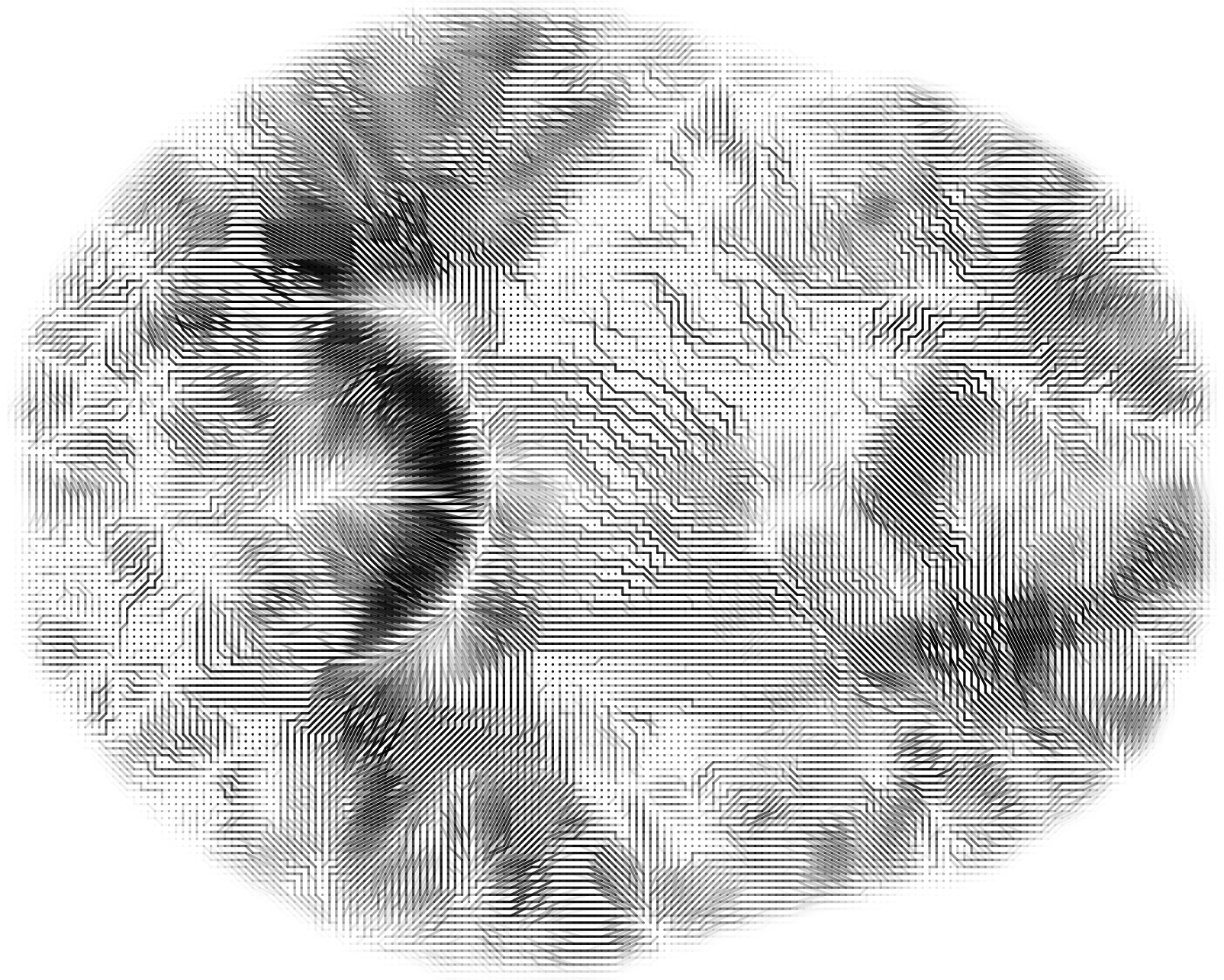} \\
                       \vspace{-0.1in} \\
                   (c) & (d)
\end{tabular}
  \vspace{-0.1in}
\caption{
  \label{fig:brains}
 Slice of (a) source and (b) target brain image and optimal transport map for
 (c) Euclidean and (d) squared Euclidean cost function. For the squared
 Euclidean cost the optimal transport solutions typically prefer to move many
 locations small distances, i.e. shift mass among nearest neighbors. For images
 the neighborhoods are on regular grids resulting in a staircase appearance of
 the transport plan.  } 
 \end{figure}

To compare the optimal transport distance to the LDDMM distance we compare how
well the distances can be used to predict clinical parameters. Using the pairwise
distances, we employ classical multidimensional scaling (MDS) to embed the pairwise
distances into Euclidean space. In this embedding each point corresponds to a 3D
brain image and thus provides a coordinate system for the relative locations
between the brain MRI's. The Euclidean structure of the embedding permits to
use standard statistical tools to form prediction models, in our case linear
regression.

Within this framework we compare Euclidean, LDDMM and optimal transport
distances as the input to the multidimensional scaling step.  Note, for the
Euclidean distance, i.e. treating each brain MRI as a point in Euclidean space,
classical multidimensional scaling is equivalent to a principal component
analysis.  
For the comparisons we used the OASIS brain database~\citep{oasis-brains}.  The
OASIS database consists of T1 weighted MRI of 416 subjects aged 18 to 96. One
hundred of the subjects over the age of 60 are diagnosed with mild to moderate
dementia. The images in the OASIS data set are already skull-stripped,
gain-field corrected and registered to the atlas space
of~\citet{talaraich:book88} with a 12-parameter affine transform. Associated
with the data are several clinical parameters. For the linear regression from
the MDS embeddings we restrict our attention to the prediction of age, mini
mental state examination (MMSE) and clinical dementia rating (CDR).

\begin{table}[tb]
\centering
\begin{footnotesize}
\begin{tabular}{l|c|c|c|c}
Model & 
Residual & $R^2$ &  $F$--statistic & $p$--value \\

\hline

${\rm age} =  a_0 + \sum_{i=1}^5 a_i l_i$ & 
{\bf 10.5} &  {\bf 0.82} & {\bf 404.9} & $ <\epsilon$ \\ 

${\rm age} =  a_0 + \sum_{i=1}^3 a_i x_i$  & 
10.87 &  0.82 & 639.5 & $ <\epsilon$ \\

${\rm age} =  a_0 + \sum_{i=1}^5 a_i z_i$  & 
12.04 &  0.78 & 297 & $ <\epsilon$ \\

${\rm age} = a_0 + a_1 z^1_1 + a_2 z^1_2 + a_3 z^2_1 + a_4 z^2_2 + a_5 z^3_1 +
a_6 z^4_2 + a_7 z^4_5 + a_8 z^5_1$  & 
10.9 & 0.82 & 239 & $<\epsilon$ \\

\hline

${\rm MMSE} = a_0 + a_1 {\rm age} $ & 
3.59 & 0.06 & 15.82 & 9.3e-05 \\

${\rm MMSE} = a_0 + a_1 l_1 $ & 
3.40 & 0.16 & 43.13 & 3.3e-10 \\

${\rm MMSE} = a_0 +a_1 x_1$ & 
3.36 & 0.18 &  50.30 & 1.6e-11 \\
 
${\rm MMSE} = a_0 + a_1 z_1 + a_2 z_3 + a_3 z_5$ & 
3.38 & 0.17 &  16.31 & 1.2e-09 \\

${\rm MMSE} = a_0 + a_1 z^2_2 + a_2 z^3_2 + a_3 z^3_4 + a_4 z^4_5 +
a_5 z^5_1 + a_6 z^5_3 + a_7 z^6_3$ & 
{\bf 3.14 } & {\bf 0.30 } & {\bf 14.03} & {\bf 4.3e-15 }\\

\hline

${\rm CDR} = a_0 + a_1 {\rm age} $ & 
0.27 & 0.25 & 144.5 & $ <\epsilon$ \\

${\rm CDR} = a_0 + a_1 l_1$ & 
0.26 & 0.34 & 223.9 & $ <\epsilon$ \\ 

${\rm CDR} = a_0 + a_1 x_1$ & 
0.25 & 0.36 & 248.5 & $ <\epsilon$ \\ 
 
${\rm CDR} = a_0 + a_1 z_1 + a_2 z_3 + a_3 z_5$ & 
0.26 & 0.35 & 77.5 & $ < \epsilon$ \\ 

${\rm CDR} = a_0 + a_1 z^3_4 + a_2 z^5_1 + a_3 z^5_3 + a_4 z^5_5 + a_5 a^6_3$ & 
{\bf 0.25} & {\bf 0.38} & {\bf 53.1} & $ < \epsilon$  \\ 

\end{tabular}
\end{footnotesize}
\caption{Optimal linear regression models from the OASIS data set, for  age,
  mini mental state examination (MMSE) and clinical dementia rating (CDR). The
  PCA coordinates from the Euclidean distances are denoted with $l_i$, the
  diffeomorphic manifold coordinates with $x_i$, the transport coordinates by
  $z_i$ and the multiscale transport coordinates with $z_i^j$ from coarsest
  $j=1$ to finest $j=6$ scale.  An entry with ``$ <\epsilon$'' denotes
  quantities smaller than machine precision. The best results are indicated in bold.  }
\label{tab:regression_oasis}
\end{table}
The MDS computation requires pairwise distances.  To speed up computations we
reduce the number of points by downsampling the volumes to size $44 \times 52
\times 44$. This results, discarding zero intensity voxels, in point clouds of
approximately $40,000$ points for each brain MRI. A single distance computation
with capacity propagation takes on the order of 10 seconds resulting in a total
computation time of around 2 weeks for all pairwise distances. For embedding
the optimal transport distance we consider two approaches:
\begin{compactenum}
\item A five dimensional MDS embedding based on the transport cost at the finest
  scale.
\item A multiscale embedding using multiple five dimensional MDS embeddings, one
  for the transport cost at each scale. 
\end{compactenum}

From the embeddings we build linear regression models, using the embedding
coordinates as independent variables and the clinical variables as dependent
variables.  As in~\citet{gerber:media10} we use the Bayesian information
criterion (BIC) on all regression subsets to extract a models that trade-off
complexity, i.e. number of independent variables, with the quality of fit.
Table~\ref{tab:regression_oasis} shows the results of the optimal transport,
with squared Euclidean cost, distances compared to the results reported
in~\citet{gerber:media10}. The transport based approach shows some interesting
behaviours. The single scale model performs worse on age while performing
similar on MMSE and CDR. The multiscale transport models perform similar to the
LDDMM approach except for MMSE where it almost doubles the explained variance
$R^2$.  This suggests that the multiscale approach captures information about
MMSE not contained in a single scale and prompts further research of multiscale
based models to predict clinical parameters.

\section{Conclusion}
The multiscale framework extends the size of transport problems that can be
solved by linear programming by one to two orders of magnitude in the size of
the point sets. 

The framework is flexible, the linear program at each scale can be solved by
any method. Depending on the refinement strategy a dual solution would need to
be constructed as well. The method can also be applied to solve the linear
assignment problem. The solution at the finest scale will be binary if $n=m$,
however, at intermediate steps a binary solution is not guaranteed unless at
each scale the source and target sets have the same number of points. This
could be enforced during the construction of the multiscale structures. As
currently defined the method requires point set inputs for the multiscale
constructing of the transport problems. However, one could design methods that
construct such a multiscale structure from a cost matrix only. However, for
large problems pairwise computations of all costs is typically prohibitively
expensive.

The multiscale transport framework provides several options for further
research.  The framework can be combined with various regularizations, e.g.
depending on scale and location and induces a natural multiscale decomposition
of the transport map. This decomposition can be used to extract information
about relevant scales in a wide variety of application domains. We are
currently investigating the use of such multiscale decompositions for
representation and analysis of sets of brain MRI.

The capacity constraint propagation approach suggests a different venue for
further exploration. The capacity propagation strategy does not hinge on a
geometrical notion of neighborhoods and is a suitable candidate to extend the
multiscale framework to non-geometric problems. Studying the interdependency
between the cost function, hierarchical decomposition of the transport problem
(or possibly more generic linear programs) and the efficiency of the capacity
constraint propagation is a challenging but interesting problem.

\acks{"We thank the referees for their constructive feedback. This work was
supported, in part, by NIH/NIBIB and NIH/NIGMS via 1R01EB021396-01A1:
Slicer+PLUS: Point-of-Care Ultrasound." }

\bibliography{MyPublications,sgerber,optimal-transport,multiscale,DiffusionBib}

\end{document}